\newtheorem{assumption}{Assumption}
 \newtheorem{remark}{Remark}
\newtheorem{theorem}{Theorem}
\newtheorem{prop}{Proposition}
\newtheorem{corollary}{Corollary}
\newtheorem{lemma}{Lemma}
\def\Pr{\mathop{\rm Pr}}
\def\argmin{\mathop{\rm arg\, min}}
\def\B{{\mathcal B}}
\def\P{{\mathcal P}}
\newcommand{\R}{\mathds{R}}
\newcommand{\Zplus}{\mathbb{Z}_+}
\begin{document}

\sloppy
\title{Reinforcement Learning with Function Approximation for Non-Markov Processes}
\author{Ali Devran Kara
\thanks{The author is with the Department of Mathematics,
     Florida State University, Tallahassee, FL, USA,
     Email: akara@fsu.edu}
     }
\maketitle

\maketitle
\begin{abstract}
We study reinforcement learning methods with linear function approximation under non-Markov state and cost processes. We first consider the policy evaluation method and show that the algorithm converges under suitable ergodicity conditions on the underlying non-Markov processes. Furthermore, we show that the limit corresponds to the fixed point of a joint operator composed of an orthogonal projection and the Bellman operator of an auxiliary \emph{Markov} decision process. 

For Q-learning with linear function approximation, as in the Markov setting, convergence is not guaranteed in general. We show, however, that for the special case where the basis functions are chosen based on quantization maps, the convergence can be shown under similar ergodicity conditions. Finally, we apply our results to partially observed Markov decision processes, where finite-memory variables are used as state representations, and we derive explicit error bounds for the limits of the resulting learning algorithms.
\end{abstract}

\section{Introduction}

Model-free reinforcement learning methods aim to compute approximately optimal control policies, or the value function of a stochastic control problem, directly from interaction data without constructing a model of the dynamics. Although these algorithms do not require explicit knowledge of the dynamics, their theoretical guarantees  rely on the assumption that the underlying control problem is a Markov decision process (MDP). In practice, this assumption is often idealized, holding only in simulated environments.

In this paper, we study reinforcement learning algorithms when the observed state and cost processes are general stochastic processes that do not form an MDP. We focus on methods with linear function approximation and analyze both their convergence properties and the interpretation of the limits if convergence occurs.

We concentrate on two classical reinforcement learning methods under linear function approximation: policy evaluation and Q-learning. Linear function approximation is one of the simplest schemes for handling high-dimensional state spaces. It is also the most theoretically tractable setting, providing insight into the behavior of learning algorithms under function approximation.

Existing convergence analyses often assume that the state process is Markov and that the cost depends only on the current state and action. Under these assumptions, policy evaluation and Q-learning aim to approximate the value of a given policy and the optimal state-action value function, respectively, within the span of the chosen basis functions. 

When the Markov assumption does not hold, it is not immediately clear how these iterations perform. The main questions we address in this paper are:
\begin{itemize}
\item Do the iterations converge if the processes are not Markov? What are the minimal assumptions required to guarantee convergence?
\item If the iterations converge, what does the limit represent?
\item How well do the limiting values approximate the quantities of interest? In particular, can explicit approximation error bounds be obtained?
\end{itemize}

\subsection{Related Work}

One of the main challenges in the optimality analysis and learning of stochastic control problems is the curse of dimensionality. Function approximation methods are widely used to tackle this issue. In particular, reinforcement learning with linear function approximation has been studied extensively for fully observed Markov control problems.

\cite{tsitsiklis1997analysis} was among the first to analyze linear function approximation for policy evaluation in fully observed MDPs, showing the convergence of TD($\lambda$) methods. However, analyzing the learning of optimal Q-values under linear function approximation is more challenging. In particular, the invariant measure of the exploration policy may differ from that induced by the greedy policy, so the algorithm may fail to converge in general. \cite{melo2008analysis} showed the convergence under a covariance dominance condition relating the feature covariance induced by the greedy policy and that induced by the exploration policy. This condition suggests that the exploration policy should not deviate far from greedy action selection in general settings.

Several other special cases guarantee convergence. First, in the exact representation case, if the optimal Q-value lies in the span of the chosen basis functions, it can be learned exactly. In this case, the composition of the projection mapping and the Bellman operator coincides with the Bellman operator itself, and hence remains a contraction under the uniform norm \cite{Ruszczy2024,jin2023provably}. Second, if the basis functions are orthonormal (e.g., in discretization-based approximations), the projection map is non-expansive not only in the $L_2$ norm but also in the uniform norm, allowing convergence and error analysis without restrictive conditions \cite{kara2023q}.

For general basis functions, Meyn \cite{meyn2024projected} recently showed that although the composition of the projection and Bellman operators is not necessarily a contraction, it admits at least one fixed point if the exploration policy is $\epsilon$-greedy. Furthermore, the parameter iterations remain almost surely bounded.

Function approximation beyond fully observed MDPs  remains relatively less studied.   \cite{cai2022reinforcement} study learning for partially observed MDPs using linear function approximation, assuming that the transition and observation densities are exactly representable by the basis functions. They consider finite-memory variables and impose a restrictive observability condition on the observation model, which ensures invertibility of the observation distributions and allows the Bellman mapping for the finite-memory variables to be parametrized. This condition guarantees that any distribution over observations uniquely determines the hidden state distribution.

Q-learning under non-Markovian settings has been studied in a few works, e.g., \cite{dong2022simple, chandak2022reinforcement, kara2021convergence, kara2024qlearning, sinha2024periodic}. Prior to such recent studies, we note that \cite{singh1994learning} showed the convergence of Q-learning for POMDPs with measurements viewed as state variables which represents a special class of non-Markov dynamics.

\cite{kara2021convergence} analyzed Q-learning for partially observed MDPs with finite-window measurements and demonstrated near-optimality under filter stability conditions. Similarly, \cite{sinha2024periodic} studied Q-learning based on the functions of history for POMDPs and proved convergence under general learning rates.

\cite{dong2022simple} proposed a general RL framework for complex environments with finite variables, allowing infinite past dependence, and assuming stationary transitions under certain regularity conditions. \cite{chandak2022reinforcement} analyzed Q-learning convergence in non-Markovian environments by imposing continuity and measurability conditions on the infinite-dimensional observable history, using an ODE-based approach pioneered in \cite{borkar2000ode}. Finally, \cite{kara2024qlearning} established convergence of tabular Q-learning under ergodicity assumptions for the non-Markov state process, showing that the learned values correspond to an auxiliary MDP,  which allows one to compare the performance of the learned controls against the optimal value. 

In this paper, we extend these results to linear function approximation for general non-Markov state and cost processes under ergodicity conditions. We study both policy evaluation and Q-learning using linear function approximations. For policy evaluation, we show that the convergence holds under ergodicity assumptions. As a special case, we consider the partially observed control problems with finite-memory controllers. We provide upper bounds on the error of the learned value, building on the finite-memory approximation framework
developed in \cite{kara2020near,kara2023convergence}. For Q-learning with linear function approximation, convergence is not guaranteed in general. However, under discretization, the algorithm reduces to tabular Q-learning on the discretized non-Markov state process, allowing us to apply results from \cite{kara2024qlearning}. Furthermore, for POMDPs using discretization-based basis functions, the error analysis of \cite{devran2025near} applies under less restrictive assumptions on the model and exploration policy.

\subsection{Problem Formulation}

We consider three stochastic processes:
\begin{itemize}
\item $S_t$ is an $\mathds{S}$-valued stochastic process representing the state,
\item $C_t$ is a real-valued process representing the cost realizations,
\item $U_t\sim\gamma(\cdot|S_t)$ is the control process generated by some randomized  feedback control function $\gamma:\mathds{S}\to\P(\mathds{U})$.
\end{itemize}
Here, $\mathds{S} \subset \mathds{R}^n$ and $\mathds{U} \subset \mathds{R}^m$ are Borel spaces, for some finite $m,n<\infty$. All processes are defined on a filtered probability space $(\Omega,\mathcal{F},\{\mathcal{F}_t\}_{t\ge 0},\mathbb{P})$ and are adapted to the filtration.

We study two reinforcement learning algorithms applied to these processes: policy evaluation (TD(0)) and Q-learning under linear function approximation. Let $\{\phi^i(s)\}_{i=1}^d$, $\phi^i: \mathds{S} \to \mathds{R}$, be a set of known basis functions, and denote $\mathbf{\Phi}^\intercal := [\phi^1, \dots, \phi^d]$. Policy evaluation tracks parameters $\{\theta_t\}$ given by
\begin{align}\label{main_iter_0}
\theta_{t+1} = \theta_t - \alpha_t \mathbf{\Phi}(S_t) \left[ \theta_t^\intercal \mathbf{\Phi}(S_t) - C_t - \beta \theta_t^\intercal \mathbf{\Phi}(S_{t+1}) \right],
\end{align}
where $0 < \beta < 1$ is the discount factor and $\alpha_t$ is the learning rate.

For Q-learning, the basis functions are extended to the action space: $\{\phi^i(s,u)\}_{i=1}^d$, $\phi^i: \mathds{S} \times \mathds{U} \to \mathds{R}$, and parameters are updated as
\begin{align}\label{q_iter_0}
\theta_{t+1} = \theta_t - \alpha_t \mathbf{\Phi}(S_t,U_t) \Big[ \theta_t^\intercal \mathbf{\Phi}(S_t,U_t) - C_t - \beta \min_v \theta_t^\intercal \mathbf{\Phi}(S_{t+1},v) \Big].
\end{align}

In the standard Markovian setup, the state evolves as $S_{t+1} \sim \mathcal{T}(\cdot|S_t,U_t)$ for a Markov kernel $\mathcal{T}$, and the cost depends only on the current state and action: $C_t = c(S_t,U_t)$ for some $c:\mathds{S\times U}\to\mathds{R}$.  For the Markovian standard setup, the algorithms then aim to approximate 
\begin{align*}
\sum_{t=0}^\infty \beta^t \mathbb{E}_\gamma[c(S_t,U_t)|S_0=s_0], \quad 
\inf_\gamma \sum_{t=0}^\infty \beta^t \mathbb{E}_\gamma[c(S_t,U_t)|S_0=s_0,U_0=u_0],
\end{align*}
on the span of the basis functions $\{\phi^i\}$ where the expectations are with respect to the transition kernel $\mathcal{T}$ and the policy $\gamma(du|s)$.  The first term above represents accumulated infinite horizon expected discounted cost under the policy $\gamma$, which we refer to as  the value of the policy $\gamma$. The second term represents the optimal value that can be achieved if the initial state and action pair is given by some $(s_0,u_0)$, which is also referred to as the optimal Q-value for $(s_0,u_0)$ or the state-action value function.

In this paper, we assume that the processes $S_t, C_t, U_t$ do not necessarily follow the standard Markovian setting. We study sufficient conditions that guarantee convergence of the iterations (\ref{main_iter_0}) and (\ref{q_iter_0}) beyond the Markovian case, and we characterize the limit when convergence occurs. Our main contributions are as follows:

\begin{itemize}
\item \textbf{Policy Evaluation (Section \ref{pol_eval_sec}):} We analyze the convergence of the iterations (\ref{main_iter_0}) and characterize their limit. 
\begin{itemize}
\item In Section \ref{stoc_app_sec}, we prove convergence of a stochastic approximation algorithm for solving a linear equation under non-Markov noise, extending the arguments of \cite{benveniste2012adaptive} via decomposition of the noise using a Poisson equation, where we adapt the arguments to non-Markov processes using proper ergodicity and mixing assumptions.
\item In Section \ref{stat_MDP}, we construct an auxiliary Markov decision process, called the \emph{stationary regime MDP}, corresponding to the stationary behavior of the non-Markov state process $S_t$.
\item In Sections \ref{lin_app_proj} and \ref{pol_eval_conv_sec}, we define an orthogonal projection map for the basis functions $\{\phi^i\}_{i=1}^d$ and a Bellman map for the stationary regime MDP. Using the stochastic approximation result, we show that (\ref{main_iter_0}) converges, and that its limit coincides with the fixed point  of the joint map composed of the projection map and the Bellman map for the stationary regime MDP. In particular, this implies that the iterations under non-Markov processes converge to the same limit as if the iterations were applied to a Markov process generated by the stationary regime MDP.
\item In Section \ref{err_anal_sec}, we analyze the error of the learned value with respect to the value of the policy $\gamma(du|s)$ under the stationary regime MDP.
\end{itemize} 

\item \textbf{Q-Learning (Section \ref{q_learn_sec}):} We study the behavior of the projected Bellman operator for the stationary regime MDP under greedy action selection. As in standard MDPs  (not very surprisingly), Q-learning with linear function approximation generally fails to converge under non-Markov processes, except in special cases: 
(i) the cost function and transition kernel of the stationary regime MDP are perfectly linear in the chosen basis functions, 
(ii)  the feature covariance induced by the greedy policy
is uniformly dominated by that induced by the exploration policy after discounting, or 
(iii) the basis functions are constructed using indicator functions on a discretization of $\mathds{S}$ and $\mathds{U}$.

\item \textbf{Partially Observed MDPs (Section \ref{pomdp_section}):} We apply our framework to POMDPs with finite-memory controllers. For policy evaluation under finite memory, we derive explicit error bounds for the learned values, decomposing the error into a term due to projection and a term due to finite-memory approximation, which is related to the filter stability of the underlying system. For Q-learning with finite-memory variables, we consider discretization-based basis functions and provide convergence results and  error analysis for this setting.
\end{itemize}

\begin{remark}
Throughout the paper, $K < \infty$ denotes a generic constant. Its value may differ at different steps, but at each step it is uniform over other variables, such as time $t$ or random variables, within the given context.
\end{remark}

\section{Policy Evaluation for Non-Markov Processes}\label{pol_eval_sec}


\subsection{A Stochastic Approximation Result for Non-Markov Processes under Ergodicity}\label{stoc_app_sec}

We define the joint process $Z_t := (S_{t+1}, S_t, C_t, U_t)$.  We first present the assumptions for the main result.

\begin{assumption}\label{poisson}
\begin{itemize}
\item[i.] For any bounded function $f$, we have
\begin{align*}
\frac{1}{N} \sum_{t=1}^N f(Z_t) \to \int f(z) \pi(dz) \quad \text{a.s.}
\end{align*}
almost surely for some probability measure $\pi \in \P(\mathds{S}^2 \times \mathds{R} \times \mathds{U})$.

\item[ii.] For the matrix-valued function $A(Z_t)$ and the vector-valued function $b(Z_t)$, define
\begin{align}\label{gordin}
Y^A_t := \sum_{k=0}^\infty \| E[A(Z_{t+k})|\mathcal{F}_t] - A \|, \quad
Y^b_t := \sum_{k=0}^\infty \| E[b(Z_{t+k})|\mathcal{F}_t] - b \|,
\end{align}
where $A := \int A(z) \pi(dz)$ and $b := \int b(z) \pi(dz)$ and where we use the spectral norm for the matrices. We assume that these sequences are uniformly bounded in $L_2$: $\sup_t \|Y^A_t\|_2 < \infty$ and $\sup_t \|Y^b_t\|_2 < \infty$.

\item[iii.] $A(Z_t)$ and $b(Z_t)$ are uniformly bounded functions.
\end{itemize}
\end{assumption}

\begin{remark}
If $Z_t$ is strictly stationary, then $\|Y_t^A\|_2 = \|Y_0^A\|_2$ and $\|Y_t^b\|_2 = \|Y_0^b\|_2$ for all $t$. Without stationarity, the $L_2$ boundedness can still be extended to all $t$, as we show next.
\end{remark}

A sufficient condition for Assumption \ref{poisson} to hold without stationarity is via a summable strong mixing coefficient. For two sub-$\sigma$-algebras $\mathcal{A}, \mathcal{B} \subset \mathcal{F}$, define
\begin{align}\label{alpha}
\alpha(\mathcal{A},\mathcal{B}) := \sup_{A \in \mathcal{A}, B \in \mathcal{B}} | P(A \cap B) - P(A) P(B) |.
\end{align}

Let $\mathcal{F}_j^-$ denote the $\sigma-$algebra generated by $\{Z_t: t \leq j\}$. Similarly, $\mathcal{F}_j^+$ denote the  $\sigma-$algebra generated by $\{Z_t: t\geq j\}$. We recall the strong mixing coefficient of the process $\{Z_t\}$ defined by 
\begin{align}\label{mixing}
\alpha(k):=\sup_j \alpha(\mathcal{F}_j^- , \mathcal{F}^+_{j+k}).
\end{align}

\begin{assumption}\label{mixing_assmp}
\begin{itemize}
\item[i.] The random variables $Y_0^A$ and $Y_0^b$ defined in (\ref{gordin}) satisfy
\[
\|Y_0^A\|_2 < \infty \quad \text{and} \quad \|Y_0^b\|_2 < \infty.
\]
\item[ii.] The mixing coefficients $\alpha(k)$ defined in (\ref{mixing}) satisfy
\[
\sum_{k=0}^\infty \alpha(k)^{1/2} < \infty.
\]
\end{itemize}
\end{assumption}

\begin{lemma}
Assumption \ref{mixing_assmp} implies Assumption \ref{poisson} (ii). That is, if $\|Y_0^A\|_2 < \infty$, $\|Y_0^b\|_2 < \infty$, and $\sum_{k=0}^\infty \sqrt{\alpha(k)} < \infty$, then the sequences $\{Y^A_t\}$ and $\{Y^b_t\}$ are uniformly bounded for all $t$:
\[
\sup_t \|Y^A_t\|_2 < \infty, \quad \sup_t \|Y^b_t\|_2 < \infty.
\]
\end{lemma}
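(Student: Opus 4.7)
The plan is to show that the $L^2$-bound at time $t = 0$ contained in Assumption \ref{mixing_assmp}(i) propagates to every $t \geq 0$ by combining the standard covariance inequality for strong mixing (which controls $\|E[A(Z_{t+k})|\mathcal{F}_t] - E[A(Z_{t+k})]\|_2$) with a Jensen/tower argument (which controls the deterministic bias $\|E[A(Z_{t+k})] - A\|$). I would state and prove both results for $Y_t^A$; the argument for $Y_t^b$ is identical.

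The first step is to invoke the Ibragimov covariance inequality: for bounded $X \in L^\infty(\mathcal{G})$ and $Y \in L^\infty(\mathcal{H})$, $|E[XY] - E[X]E[Y]| \leq 4 \|X\|_\infty \|Y\|_\infty \alpha(\mathcal{G},\mathcal{H})$. Taking $X = E[Y|\mathcal{G}] - E[Y]$ and noting $\|X\|_\infty \leq 2\|Y\|_\infty$ yields $\|E[Y|\mathcal{G}] - E[Y]\|_2 \leq 4 \|Y\|_\infty \sqrt{\alpha(\mathcal{G},\mathcal{H})}$. Applied entrywise to $A(Z_{t+k})$ (bounded by Assumption \ref{poisson}(iii)), with $\mathcal{G} = \mathcal{F}_t \subseteq \mathcal{F}_t^-$ and $\mathcal{H} = \mathcal{F}_{t+k}^+$, and using equivalence of matrix norms in finite dimensions, I get
\begin{align*}
\bigl\|\|E[A(Z_{t+k})|\mathcal{F}_t] - E[A(Z_{t+k})]\|\bigr\|_2 \leq K \sqrt{\alpha(k)},
\end{align*}
where $K$ depends only on $d$ and $\sup_z \|A(z)\|$.

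The second step handles the deterministic bias term. Using the tower property and Jensen's inequality applied to the matrix norm,
\begin{align*}
\|E[A(Z_{t+k})] - A\| = \bigl\|E\bigl[E[A(Z_{t+k})|\mathcal{F}_0] - A\bigr]\bigr\| \leq E\bigl[\|E[A(Z_{t+k})|\mathcal{F}_0] - A\|\bigr].
\end{align*}
Summing over $k \geq 0$ and reindexing $n = t + k$,
\begin{align*}
\sum_{k=0}^\infty \|E[A(Z_{t+k})] - A\| \leq \sum_{n=t}^\infty E\bigl[\|E[A(Z_n)|\mathcal{F}_0] - A\|\bigr] \leq E[Y_0^A] \leq \|Y_0^A\|_2,
\end{align*}
where the final bound is finite by Assumption \ref{mixing_assmp}(i) and is uniform in $t$.

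The third step assembles the two estimates. Using the triangle inequality $\|E[A(Z_{t+k})|\mathcal{F}_t] - A\| \leq \|E[A(Z_{t+k})|\mathcal{F}_t] - E[A(Z_{t+k})]\| + \|E[A(Z_{t+k})] - A\|$, applying Minkowski's inequality in $L^2(\Omega)$ to interchange the $L^2$ norm with the sum in the definition of $Y_t^A$, and combining the two steps gives
\begin{align*}
\|Y_t^A\|_2 \leq K \sum_{k=0}^\infty \sqrt{\alpha(k)} + \|Y_0^A\|_2,
\end{align*}
which is finite by Assumption \ref{mixing_assmp} and independent of $t$.

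The main conceptual obstacle is that strong mixing only controls $E[A(Z_{t+k})|\mathcal{F}_t]$ relative to its own unconditional mean $E[A(Z_{t+k})]$, not relative to the ergodic limit $A$; without stationarity these need not coincide, and no mixing estimate alone suppresses the bias. The resolution is to recognize that summing $\|E[A(Z_{t+k})] - A\|$ in $k$ produces a tail of the same series that defines $Y_0^A$ after reindexing, so the $t = 0$ summability hypothesis transfers to all $t$ with a uniform bound.
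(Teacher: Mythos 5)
Your proof is correct and follows essentially the same route as the paper's: both decompose $E[A(Z_{t+k})|\mathcal{F}_t]-A$ into a fluctuation term controlled by the strong-mixing covariance inequality (yielding the $\sqrt{\alpha(k)}$ bound, applied entrywise with matrix-norm equivalence) and a deterministic bias term that, after the tower property and reindexing, is absorbed into the series defining $Y_0^A$. The only cosmetic difference is that you invoke the covariance inequality with $X=E[Y|\mathcal{G}]-E[Y]$ directly, whereas the paper expands the squared $L^2$ norm of the conditional expectation and identifies the same covariance term.
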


\begin{proof}
We proove the result for $Y_t^A$ only. We denote by
\begin{align*}
e_{t+k}:= \|E[A(Z_{t+k})|\mathcal{F}_t] -A \| 
\end{align*}
 We start with the following immediate bound:
\begin{align*}
\|Y_t^A\|_2 \leq \sum_{k=0}^\infty \|e_{t+k}\|_2.
\end{align*}
In what follows, we use the relation that for a $d\times d$ matrix $A$, we have
\begin{align*}
\|A\|\leq \|A\|_F\leq \sqrt{d}\|A\|
\end{align*}
where $\|A\|_F$ denotes the Frobenius norm.
For $e_{t+k}$, denoting by $\tilde{A}^{ij} (Z_{t+k})$ the $ij$-th entry of the matrix $A(Z_{t+k}) -A$ we can write
\begin{align*}
&\|e_{t+k}\|_2^2= E\left[ \| E[A(Z_{t+k})-A|\mathcal{F}_t] \|^2 \right]\leq E\left[  \sum_{i,j} E[\tilde{A}^{ij} (Z_{t+k})|\mathcal{F}_t ]^2 \right]\\
&= \sum_{i,j} E\left[ E[\tilde{A}^{ij} (Z_{t+k})|\mathcal{F}_t ]^2 \right]  = \sum_{i,j} E\left[ \tilde{A}^{ij} (Z_{t+k}) E[\tilde{A}^{ij} (Z_{t+k})|\mathcal{F}_t ]\right]\\
& =  \sum_{i,j} cov \left(  \tilde{A}^{ij} (Z_{t+k}), E[\tilde{A}^{ij} (Z_{t+k})|\mathcal{F}_t ] \right) +   \sum_{i,j} E\left[ \tilde{A}^{ij} (Z_{t+k})\right]^2\\
&=   \sum_{i,j} cov \left(  \tilde{A}^{ij} (Z_{t+k}), E[\tilde{A}^{ij} (Z_{t+k})|\mathcal{F}_t ] \right)+ \| E[\tilde{A}(Z_{t+k})]\|_F^2
\end{align*}
It is a standard result (see e.g. \cite{rio1993}) that for any bounded $f$
\begin{align*}
Cov(f(Z_{t+k}), E[f(Z_{t+k}|\mathcal{F}_t)]) \leq 4 \alpha(k) \|f\|^2_\infty.
\end{align*}
Using the boundedness of $A(z)$, we can then write for some $K<\infty$ that
\begin{align*}
&\|Y_t^A\|_2 \leq \sum_{k=0}^\infty \|e_{t+k}\|_2 \leq \sum_{k=0}^\infty \sqrt{K \alpha(k)} + \sum_{k=0}^\infty \| E[\tilde{A}(Z_{t+k})]\|_F \\
&\leq \sum_{k=0}^\infty \sqrt{K \alpha(k)} +  \sum_{k=0}^\infty \left\|E\left[ E[\tilde{A}(Z_{k})|\mathcal{F}_0] \right]\right\|_F \\
&\leq  \sum_{k=0}^\infty \sqrt{K \alpha(k)} + E\left[ \sum_{k=0}^\infty \left\| E[\tilde{A}(Z_k)|\mathcal{F}_0] \right\|_F  \right]\\
&\leq  \sum_{k=0}^\infty \sqrt{K \alpha(k)} + \sqrt{d}E[Y_0^A] \leq   \sum_{k=0}^\infty \sqrt{K \alpha(k)} +\sqrt{d} \|Y_0^A\|_2 <\infty.
\end{align*}
\end{proof}

The following proposition is a key result for the convergence of the policy evaluation algorithm under non-Markovian processes. The main technical tools, Lemmas \ref{as_conv_lemma} and \ref{L2_bound_lemma}, build primarily on \cite{benveniste2012adaptive}. 

In particular, the main challenge in the convergence proof arises from the error term embedded in the updates:
\[
\delta^\intercal \big[ A - A(Z_t) \big] \theta + \delta^\intercal \big[ b(Z_t) - b \big].
\]
In \cite{benveniste2012adaptive}, this term is analyzed for a \emph{Markov} process $Z_t$, where it is decomposed into a martingale difference term and summable telescoping terms using the Poisson equation satisfied by the Markov process under appropriate ergodicity conditions.  

For our key technical tools (Lemmas \ref{as_conv_lemma} and \ref{L2_bound_lemma}), we adopt a similar strategy. Namely, we show that the \emph{non-Markov} error term in our case also satisfies a Poisson equation under Assumption \ref{poisson}, we can then  decompose it into a martingale difference term and telescoping summable error terms. Although the overall approach follows similar steps as in \cite{benveniste2012adaptive}, the extension to non-Markov processes is not straightforward. The original analysis must be  revised  carefully, e.g. the verification of ergodicity conditions, control of the error terms, and the handling of conditional expectations. Therefore, even though the decomposition idea is similar, the non-Markov setting introduces significant technical challenges that require a tailored approach.

\begin{prop}\label{key_prop}
Suppose Assumption \ref{poisson} holds (or Assumption \ref{mixing} as a sufficient condition for Assumption \ref{poisson}) and that the stationary average matrix $A$ is positive definite. Consider the stochastic approximation iteration
\[
\theta_{t+1} = \theta_t + \alpha_t \big( -A(Z_t) \theta_t + b(Z_t) \big),
\]
where $A(Z_t)$ and $b(Z_t)$ are matrix and vector valued functions, respectively. Then, $\theta_t$ converges almost surely to a limit $\theta^*$ satisfying
$
A \theta^* = b,
$
where
\[
A = E[A(Z)] = \int A(z) \, \pi(dz), \quad 
b = E[b(Z)] = \int b(z) \, \pi(dz),
\]
and $\pi$ is the stationary distribution of the joint process $Z_t = \{S_{t+1}, S_t, C_t, U_t\}$.
\end{prop}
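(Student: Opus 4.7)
The plan is to follow the Poisson-equation decomposition strategy of Benveniste--Métivier--Priouret, adapted to the non-Markov setting via the ergodicity hypothesis in Assumption \ref{poisson}. First I would rewrite the iteration by adding and subtracting the stationary mean,
\[
\theta_{t+1} = \theta_t + \alpha_t(-A\theta_t + b) + \alpha_t \xi_t, \qquad \xi_t := (A - A(Z_t))\theta_t + (b(Z_t) - b).
\]
If the mean-field piece were run alone, standard ODE arguments would give convergence to $\theta^* = A^{-1} b$ since $A$ is positive definite, so the entire analysis reduces to controlling the error $\sum_t \alpha_t \xi_t$.

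To handle $\xi_t$, I would introduce the pathwise Poisson solutions
\[
\nu^A_t := \sum_{k=0}^\infty \big(E[A(Z_{t+k})\mid \mathcal{F}_t] - A\big), \qquad \nu^b_t := \sum_{k=0}^\infty \big(E[b(Z_{t+k})\mid \mathcal{F}_t] - b\big),
\]
which exist with $\sup_t \|\nu^A_t\|_2 < \infty$ and $\sup_t \|\nu^b_t\|_2 < \infty$ by Assumption \ref{poisson}(ii). A direct tower-property calculation yields the Poisson-type identity
\[
A(Z_t) - A = \nu^A_t - E[\nu^A_{t+1}\mid \mathcal{F}_t],
\]
and analogously for $b$. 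Writing this as $A(Z_t) - A = (\nu^A_t - \nu^A_{t+1}) + (\nu^A_{t+1} - E[\nu^A_{t+1}\mid \mathcal{F}_t])$ splits $\xi_t$ into a telescoping piece plus a martingale-difference piece, which is the canonical BMP decomposition.

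With this decomposition the argument proceeds in two stages. First, I would establish $L_2$ boundedness of the iterates: squaring the recursion and using positive definiteness of $A$ yields
\[
E\|\theta_{t+1}\|^2 \le \big(1 - 2\alpha_t \lambda_{\min}(A) + O(\alpha_t^2)\big) E\|\theta_t\|^2 + O(\alpha_t^2),
\]
where the $O(\alpha_t^2)$ terms absorb the noise using the uniform $L_2$ bound on $\nu^A_t, \nu^b_t$ together with boundedness of $A(Z_t), b(Z_t)$. This delivers $\sup_t E\|\theta_t\|^2 < \infty$. Second, Abel summation against $\alpha_t$ converts the telescoping piece into terms of the form $(\alpha_t - \alpha_{t-1})\nu^A_t \theta_{t-1}$ and $\alpha_t \nu^A_t (\theta_t - \theta_{t-1})$; both are summable in $L_2$ by $\sum \alpha_t^2 < \infty$ together with the uniform bounds just obtained. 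The martingale-difference piece is controlled via Doob's inequality and a Robbins--Siegmund-type comparison, yielding almost sure convergence to $\theta^*$.

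The hard part will be propagating the non-Markov Poisson construction through the $\theta_t$-dependent factor in $\xi_t$: because $\theta_t$ is random and a priori unbounded, every step of the BMP decomposition involves products $\nu^A_t \theta_t$ in which $\nu^A_t$ is only controlled in $L_2$ and not pathwise. This forces the two-stage structure above, where mean-square boundedness must be proven first using only the uniform-in-$t$ $L_2$ estimates provided by Assumption \ref{poisson}(ii), and only then can one bootstrap to almost sure convergence. In the Markov case a deterministic sup-norm bound on the Poisson solution often lets one skip this intermediate step; in the non-Markov setting the uniform $L_2$ control granted by the mixing/ergodicity hypothesis is what makes the bootstrap go through, and verifying this uniformity (as in the preceding lemma) is the technical heart of the proof.
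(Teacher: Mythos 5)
Your overall strategy is the same as the paper's: rewrite the iteration around the mean field, build the Poisson-type solutions $\nu^A_t,\nu^b_t$ from the tails $\sum_k (E[A(Z_{t+k})\mid\mathcal F_t]-A)$, and split the noise into a martingale-difference piece plus telescoping pieces controlled by the uniform $L_2$ bounds of Assumption 1(ii). That is exactly the decomposition used in the paper's Lemmas 2 and 3. The divergence is in how the unboundedness of $\theta_t$ is handled, and there your stage-one step has a genuine gap.

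The displayed recursion
\[
E\|\theta_{t+1}\|^2 \le \bigl(1 - 2\alpha_t \lambda_{\min}(A) + O(\alpha_t^2)\bigr) E\|\theta_t\|^2 + O(\alpha_t^2)
\]
does not hold as stated: squaring the iteration produces the cross term $2\alpha_t E[\delta_t^\intercal \xi_t]$, which is first order in $\alpha_t$, and $E[\delta_t^\intercal \xi_t]$ does not vanish because $E[A(Z_t)\mid\mathcal F_{t-1}]\neq A$ for a non-Markov (or even Markov) chain. This term only becomes summable after the Poisson decomposition and Abel summation over time, but those manipulations produce terms of the form $(\alpha_{t+1}-\alpha_t)\nu^A_{t+1}\theta_t$ and $\alpha_t \nu^A_t(\theta_t-\theta_{t-1})$ whose $L_2$ control already requires a bound on $\|\theta_t\|$. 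So your two-stage plan is circular: the mean-square bound needs the summation-by-parts estimates, and those estimates need the mean-square bound. The paper breaks this circularity with the stopping time $\sigma_n=\inf\{t:\|\delta_t\|^2>2^n\}$: all Poisson-decomposition estimates are carried out on the stopped process, where $\|\delta_t\|\le 2^{n/2}$ pathwise, yielding a.s. convergence on $\{\sigma_n=\infty\}$; a separate argument (Lemma 4, adapting Theorem 17 of Benveniste et al.) then shows $P(\sigma_n=\infty)\to 1$. A secondary issue is that even if you obtained $\sup_t E\|\theta_t\|^2<\infty$, that alone would not supply the pathwise bounds on $\|M_t\|$ and on the telescoping remainders needed for the almost sure convergence argument; the stopped process gives these for free. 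Your proposal correctly identifies the decomposition and the role of the uniform $L_2$ ergodicity estimates, but without the localization device (or an equivalent a priori pathwise stability argument) the proof does not close.
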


\begin{proof}
We start by adding and subtracting $A$ and $b$, and note that $b=A\theta^*$:
\begin{align*}
\theta_{t+1}=\theta_t + \alpha_t\left(-A(Z_t)\theta_t + A\theta_t    +b(Z_t)-b - A\theta_t + A\theta^* \right).
\end{align*}
Defining $\delta_t:=\theta_t-\theta^*$ and $M_t:= (A-A(Z_t))\theta_t + b(Z_t)-b$, and subtracting $\theta^*$ from each side, we get
\begin{align*}
\delta_{t+1}=\delta_t + \alpha_t\left(-A \delta_t + M_t\right).
\end{align*}
Taking the square of both sides, we write
\begin{align}\label{random_bnd}
\|\delta_{t+1}\|^2& =\|\delta_t\|^2 + 2\alpha_t\delta^\intercal_t\left[-A\delta_t+M_t\right]+\alpha^2_t \|-A\delta_t + M_t\|^2\nonumber\\
&\leq\|\delta_t\|^2 -2\alpha_t \sigma_{\min}\|\delta_t\|^2 + 2\alpha_t \delta^\intercal_t M_t + \alpha_t^2 2\sigma_{\max}\|\delta_t\|^2 + \alpha_t^2 2\|M_t\|^2
\end{align}
where $\sigma_{\min}$ and $\sigma_{\max}$ denote the minimum and the maximum eigenvalues of $A$, and where we used the bound that $(a+b)^2\leq 2a^2+2b^2$. Using the assumption that $A(Z_t),b(Z_t)$ are uniformly bounded, we can then have the following upper bound for $\|M_t\|$:
\begin{align*}
\|M_t\|&= \|(A-A(Z_t))\theta_t + b(Z_t)-b\|\\
& \leq  \|(A-A(Z_t))\| \|\theta_t\| + \|b(Z_t)-b\|\\
&\leq K (\|\delta_t\| + 1)
\end{align*}
for some $K<\infty$. We then also have that $\|M_t\|^2 \leq K(\|\delta_t\|^2+1)$ for some generic constant $K<\infty$. Using this, we get
\begin{align}\label{robbin_bound}
\|\delta_{t+1}\|^2 &\leq \|\delta_t\|^2 -2\alpha_t \sigma_{\min}\|\delta_t\|^2 + 2\alpha_t \delta^\intercal_t M_t + \alpha_t^2 K \|\delta_t\|^2 + \alpha_t^2 K\nonumber\\
&=(1+K\alpha_t^2)\|\delta_t\|^2 - 2\alpha_t \sigma_{\min}\|\delta_t\|^2 + 2\alpha_t \delta^\intercal_t M_t + \alpha_t^2 K.
\end{align}

We note that the  Robbins-Siegmund Lemma is not directly applicable since $2\alpha_t \delta^\intercal_t M_t $ is not guaranteed to be nonnegative. Nonetheless, we can show the convergence using alternative arguments. We first introduce the following stopping time:

 \begin{align}\label{stop_time}
 \sigma_n :=\inf\{t: \|\delta_t\|^2 > 2^n\}.
 \end{align}

\begin{lemma}\label{as_conv_lemma}
Under Assumption \ref{poisson}, we have that $\sum_{t=0}^k \mathds{1}_{\{t+1\leq \sigma_n\}}2\alpha_t  \delta^\intercal_t M_t$ converges almost surely. In particular 
$\sum_{t=0}^k2\alpha_t \delta^\intercal_t M_t$ converges almost surely on the event $\{\sigma_n=\infty\}$. 
\end{lemma}
\begin{proof}
The proof can be found in Appendix \ref{as_conv_lemma_proof}.
\end{proof}
\begin{lemma}\label{L2_bound_lemma}
We define the stoping time
 \begin{align*}
 \sigma(C) :=\inf\{t: \|\delta_t\|^2 > C\}.
 \end{align*}
Under Assumption \ref{poisson}, we have that for any $n<\infty$, 
\begin{align*}
E\left[\sup_{k>n} \mathds{1}_{\{k+1 \leq \sigma(C)\}}\left(\sum_{t=n}^{k} 2\alpha_t \delta_t^\intercal M_t\right)^2  \right] \leq K (1+C^2) \sum_{t=n}^{\infty} \alpha_t^2
\end{align*}
for some constant $K<\infty$.
\end{lemma}
\begin{proof}
The proof can be found in Appendix \ref{L2_bound_lemma_proof}.
\end{proof}

Multiplying, both sides by $\mathds{1}_{\{t+1\leq \sigma_n\}}$ in (\ref{robbin_bound}), and noting that $\mathds{1}_{\{t+2\leq \sigma_n\}} \leq \mathds{1}_{\{t+1\leq \sigma_n\}}$ and denoting by $\mathds{1}_{\{t+1\leq \sigma_n\}}\delta_t=:\hat{\delta}_t$
\begin{align}\label{stopped_ineq}
\|\hat{\delta}_{t+1}\|^2 &\leq(1+K\alpha_t^2)\|\hat{\delta}_t\|^2 - 2\alpha_t \sigma_{\min}\|\hat{\delta}_t\|^2 + 2\alpha_t \hat{\delta}^\intercal_t M_t + \alpha_t^2 K\nonumber\\
&\leq (1+K\alpha_t^2)\|\hat{\delta}_t\|^2 + 2\alpha_t \hat{\delta}^\intercal_t M_t + \alpha_t^2 K
\end{align}
Next, we define 
\begin{align*}
X_t:=\frac{\|\hat{\delta}_t\|^2}{\prod_{i=1}^{t-1}(1+K\alpha_i^2)}.
\end{align*}
We then observe that 
\begin{align*}
X_{t+1}\leq \frac{\|\hat{\delta}_t\|^2}{\prod_{i=1}^{t-1}(1+K\alpha_i^2)}+ \frac{2\alpha_t \hat{\delta}^\intercal_t M_t }{\prod_{i=1}^{t}(1+K\alpha_i^2)} + \frac{K \alpha_t^2}{\prod_{i=1}^{t}(1+K\alpha_i^2)}
\end{align*}
We now introduce the following notation:
\begin{align*}
&a_t: = \frac{2\alpha_t \hat{\delta}^\intercal_t M_t }{\prod_{i=1}^{t}(1+K\alpha_i^2)} +  \frac{K \alpha_t^2}{\prod_{i=1}^{t}(1+K\alpha_i^2)}
\end{align*}
which implies that 
$
X_{t+1}\leq X_t + a_t .
$
W define 
$
U_t:= X_t - \sum_{i=1}^{t-1}a_i.
$
With this notation, we write
\begin{align}\label{sup_mart}
E[U_{t+1}|\mathcal{F}_t] &= E[X_{t+1}|\mathcal{F}_t] - E[\sum_{i=1}^{t}a_i|\mathcal{F}_t] \nonumber\\
& =  E[X_{t+1}|\mathcal{F}_t] - \sum_{i=1}^{t}a_i\nonumber\\
& \leq  X_t + a_t  - \sum_{i=1}^{t}a_i\nonumber\\
& = X_t  - \sum_{i=1}^{t-1}a_i= U_t.
\end{align}
Using the proof of Lemma \ref{L2_bound_lemma}, we can show that 
\begin{align*}
E[(\sum_{i=1}^{t-1}a_i)^2] &\leq E\left[ (\sum_{i=1 }^{t-1}4\alpha_i \hat{\delta}^\intercal_i M_i)^2   \right] + \left(\sum_{i=1 }^{t-1}2 K\alpha_i^2\right)^2\\
&\leq K (1+2^{2n}) \sum_{i=1}^{\infty} \alpha_i^2 + \left(\sum_{i=1 }^{\infty}2 K\alpha_i^2\right)^2 < \infty.
\end{align*}
Furthermore, we have that 
\begin{align*}
E[X_t^2] \leq E[\|\hat{\delta}_t\|^2] \leq 2^{2n}.
\end{align*}
Combined, this implies that 
$
\sup_tE[U_t^2]<\infty.
$
Then, together with (\ref{sup_mart}), we can conclude that $U_t$ is a supermartingale with uniformly bounded $L_2$ norm, and thus $U_t$ converges almost surely. Furthermore, using the assumption on the learning rates and Lemma \ref{as_conv_lemma}, we also know that $\sum_{i=1}^{t-1} a_i$ converges almost surely. We then conclude that $X_t = U_t + \sum_{i=1}^{t-1} a_i$ converges almost surely. Since, $\prod_{i=1}^{t-1}(1+K\alpha_i^2)$ converges as well by assumptions on the learning rates, we have that 
$
\|\hat{\delta}_t\|^2
$ converges almost surely.

Going back to (\ref{stopped_ineq}), and rearranging the terms, we write
\begin{align*}
2\alpha_t \sigma_{\min}\|\hat{\delta}_t\|^2 \leq \|\hat{\delta}_t\|^2 -\|\hat{\delta}_{t+1}\|^2 + K\alpha_t^2\|\hat{\delta}_t\|^2 + 2\alpha_t \hat{\delta}^\intercal_t M_t + \alpha_t^2 K
\end{align*}
Noting that $\|\hat{\delta}_t\|^2\leq 2^{2n}$, and summing both sides, we get:
\begin{align*}
\sum_{t=0}^k 2\alpha_t \sigma_{\min}\|\hat{\delta}_t\|^2 &\leq \|\hat{\delta}_0\|^2 - \|\hat{\delta}_{k+1}\|^2+ \sum_{t=0}^k \left(K\alpha_t^2\|\hat{\delta}_t\|^2 + 2\alpha_t \hat{\delta}^\intercal_t M_t + \alpha_t^2 K\right)\\
&\leq \|\hat{\delta}_0\|^2 +  \sum_{t=0}^k \alpha_t^2 (1+ 2^{2n}) +  \sum_{t=0}^k  2\alpha_t \hat{\delta}^\intercal_t M_t.
\end{align*}
Using, Lemma \ref{as_conv_lemma} and the conditions on the learning rates, all the terms on the right hand side converges almost surely. Hence, we have that 
\begin{align*}
\sum_{t=0}^k 2\alpha_t \sigma_{\min}\|\hat{\delta}_t\|^2  <\infty
\end{align*}
almost surely, which implies that $\liminf_k \|\hat{\delta}_t\|^2 \to 0$. Since, we have proved earlier that $\|\hat{\delta}_t\|^2$ converges almost surely, the limit has to be 0, that is $\|\hat{\delta}_t\|^2\to 0$ almost surely. In particular, $\|{\delta}_t\|^2\to 0$ almost surely on the event $\{\sigma_n=\infty\}$.

Adapting the arguments of  \cite[Theorem 17]{benveniste2012adaptive} to the non-Markovian processes using Lemma \ref{L2_bound_lemma}, we can show that $P(\{\sigma_n=\infty\})\to 1$. We included the full proof of this in  Appendix \ref{bound_lemma_proof} for completeness.  
\begin{lemma}\label{bound_lemma}
Under Assumption \ref{poisson}, $P(\{\sigma_n=\infty\})\to 1$.
\end{lemma}
Lemma \ref{bound_lemma}, then concludes the proof. In particular, denoting by $A$ the event that $\|\delta_t\|\to 0$, and by $E_n$ the event that $\{\sigma_n=\infty\}$, we then have that $P(E_n \cap A^c)=0$ for all $n$. We can write
\begin{align*}
P((\cup_{n=1}^\infty E_n)\cap A^c)= P(\cup_{n=1}^\infty (E_n\cap A^c)) = \lim_n P(E_n \cap A^c)=0. 
\end{align*} 
Hence, together with the fact that $P(\cup_n E_n)=1$, we conclude that $P(A^c)=0$.

\end{proof}

\subsection{Stationary Regime MDP}\label{stat_MDP}

Recall joint process $Z_t:=\{S_{t+1},S_t,C_t,U_t\}$ where $S_t$ is a stochastic process representing the state process, $C_t$ is another process representing the cost realizations, and $U_t\sim\gamma(\cdot|S_k)$ is the control process generated by some policy $\gamma:\mathds{S}\to\P(\mathds{U})$.

Consider the invariant distribution $\pi$ of the process  $Z_t:=\{S_{t+1},S_t,C_t,U_t\}$ under Assumption \ref{poisson}.  We now define a Markov decision process for the stationary regime. 
The cost function and the transition kernel  are defined using the regular conditional distributions based on the stationary measure $\pi(\cdot)$ such that
\begin{align}\label{stat_model}
c(s,u)&:=E^\pi[C|s,u] \quad \forall s,u\in\mathds{S}\times\mathds{U}\nonumber\\
\eta(s_1\in A|s,u)& := E^\pi\left[\mathds{1}_{\{S_1\in A\}}|s,u\right]  \quad \forall s,u\in\mathds{S}\times\mathds{U}
\end{align}
where the expectation is with respect to the stationary distribution $\pi$ on $\{S_{t+1},S_t,C_t,U_t\}$. Note that the cost function and the transition model of this MDP depends on the stationary distribution and thus the policy $\gamma$ which leads to the particular stationary measure. We omit this dependence on the notation for brevity.

We define the following Bellman operator for  this stationary regime MDP under the policy $\gamma$, such that for $f\in L_2(\pi,\mathds{S})$, we write that
\begin{align}\label{appr_bell}
T^\gamma f(s) := \int_\mathds{U}\left(c(s,u) + \beta \int f(s_1)\eta(ds_1|s,u)\right)\gamma(du|s).
\end{align}
Similarly, for $g\in L_2(\pi,\mathds{S\times U})$, we write that
\begin{align}\label{appr_bell2}
T g(s,u) := c(s,u) + \beta \int {g^-}(s_1)\eta(ds_1|s,u)
\end{align}
where ${g^-}(s):=\inf_u g(s,u)$.

We define the value function of this MDP under the policy $\gamma$ by 
\begin{align*}
J^\pi_\beta(s_0,\gamma):= \sum_{t=0}^\infty \beta^t E^\gamma [c(\bar{S}_t,\bar{U}_t)|\bar{S}_0=s_0]
\end{align*}
where $\bar{S}_t$ denotes the {\it Markov} process with transition kernel $\eta(ds_1|s,u)$ defined in (\ref{stat_model}) and where $\bar{U}_t\sim\gamma(\cdot|\bar{S}_t)$. We put the bar notation to differentiate this from the original non-Markov process $S_t$.

\subsection{Linear Function Approximation and Projection}\label{lin_app_proj}
We consider the $L_2$ space of real valued functions on $s\in \mathds{S}$ with the measure $\pi\in \P(\mathds{S})$ under the usual inner product. The construction in this section is valid for any measure $\pi(\cdot)$, however, $\pi$ will mostly refer to the stationary measure of the process, and in particular its marginal on $S_t$.  

We introduce a set of basis functions $\{\phi^i(s)\}_{i=1}^d$ where  $\phi^i(s):\mathds{S}\to \mathds{R}$.  We denote by ${\bf \Phi}^\intercal:= [\phi^1,\dots,\phi^d]$ the vector of the basis functions.
\begin{assumption}
We assume for the rest of the paper that $
\|\phi^i\|_\infty\leq 1$
for all $i=1,\dots,d$.
\end{assumption}

\begin{assumption}\label{lin_ind}
We assume  for the rest of the paper that $\{\phi^i(s)\}$ are linearly independent in $L_2(\pi)$ such that $E\left[\Phi(S)\Phi^\intercal(S)\right]$ is invertible. 
\end{assumption}

We denote by $\Pi$ the projection map from $ L_2(\pi,\mathds{S})$ onto the span of ${\bf \Phi}^\intercal:= [\phi^1,\dots,\phi^d]$. In particular, for some $f\in  L_2(\pi,\mathds{S})$, $\Pi(f) = \theta_f^\intercal {\bf \Phi} $ where 
\begin{align}\label{proj}
\theta_f =\argmin_{\theta\in\mathds{R}^d} \sqrt{\int_{\mathds{S}} \left| f(s) - \theta^\intercal {\bf \Phi}(s)\right|^2 \pi(ds)}.
\end{align}

\begin{prop}\label{comp_contr}
The mapping $\Pi T^\gamma$ is a contraction under the $L_2$ norm, and thus admits a unique fixed point.
\end{prop}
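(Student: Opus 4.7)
The plan is to follow the classical $L_2(\pi)$-contraction argument of Tsitsiklis--Van Roy, with one non-trivial check specific to the stationary regime MDP: namely, that the marginal of $\pi$ on $\mathds{S}$ is invariant under the Markov transition induced by $\eta$ composed with $\gamma$. Throughout, I will (by a slight abuse of notation) continue to write $\pi$ for this marginal on $\mathds{S}$.

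First, I will verify that $T^\gamma$ is a $\beta$-contraction on $L_2(\pi, \mathds{S})$. For $f, g \in L_2(\pi)$, the cost terms cancel and
\begin{align*}
(T^\gamma f - T^\gamma g)(s) = \beta \int \int \bigl(f(s_1) - g(s_1)\bigr)\,\eta(ds_1 \mid s, u)\,\gamma(du \mid s).
\end{align*}
Applying Jensen's inequality (twice, once for $\gamma(\cdot|s)$ and once for $\eta(\cdot|s,u)$, both probability measures), I get
\begin{align*}
|(T^\gamma f - T^\gamma g)(s)|^2 \leq \beta^2 \int \int \bigl(f(s_1) - g(s_1)\bigr)^2 \eta(ds_1 \mid s, u)\,\gamma(du \mid s).
\end{align*}
Integrating against $\pi$ and using the key invariance property (see next paragraph), the right-hand side becomes $\beta^2 \|f-g\|^2_{L_2(\pi)}$.

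The crucial step, and the one I expect to require the most care, is justifying
\begin{align*}
\int \int \int h(s_1)\,\eta(ds_1\mid s,u)\,\gamma(du\mid s)\,\pi(ds) = \int h(s_1)\,\pi(ds_1)
\end{align*}
for bounded $h$. This follows from the construction of $\eta$ in (\ref{stat_model}) as the regular conditional distribution of $S_{t+1}$ given $(S_t, U_t)$ under the joint stationary law $\pi$ on $Z_t = (S_{t+1}, S_t, C_t, U_t)$, together with the fact that $U_t$ is conditionally distributed as $\gamma(\cdot \mid S_t)$ and the marginals of $\pi$ on $S_t$ and $S_{t+1}$ coincide by stationarity. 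Concretely, the inner double integral equals $E^\pi[h(S_{t+1}) \mid S_t = s]$ after marginalizing out $U_t$, and integrating this against the $S_t$-marginal of $\pi$ gives the $S_{t+1}$-marginal of $\pi$, which equals the $S_t$-marginal.

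Finally, $\Pi$ is an orthogonal projection in $L_2(\pi)$ onto the finite-dimensional (hence closed) subspace $\mathrm{span}\{\phi^1, \dots, \phi^d\}$, so $\Pi$ is non-expansive: $\|\Pi f - \Pi g\|_{L_2(\pi)} \leq \|f - g\|_{L_2(\pi)}$. Composing,
\begin{align*}
\|\Pi T^\gamma f - \Pi T^\gamma g\|_{L_2(\pi)} \leq \|T^\gamma f - T^\gamma g\|_{L_2(\pi)} \leq \beta\, \|f - g\|_{L_2(\pi)},
\end{align*}
so $\Pi T^\gamma$ is a $\beta$-contraction on the complete metric space $L_2(\pi)$ (or equivalently, since $\Pi T^\gamma$ takes values in the closed subspace $\mathrm{span}\{\phi^i\}$, on that finite-dimensional closed subspace). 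The Banach fixed point theorem then yields existence and uniqueness of the fixed point. A minor preliminary to check is that $T^\gamma$ maps $L_2(\pi)$ into itself, which follows from boundedness of $c$ (since $\pi$ is a probability measure) and the Jensen bound applied to the second term.
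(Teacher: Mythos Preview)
Your proposal is correct and follows essentially the same argument as the paper: non-expansiveness of the orthogonal projection, Jensen's inequality to pass the square inside the transition integral, and the key invariance identity $\eta(ds_1\mid s,u)\gamma(du\mid s)\pi(ds)=\pi(ds_1,du,ds)$ together with equality of the $S_t$ and $S_{t+1}$ marginals under the stationary law. The only differences are cosmetic---you spell out the Banach fixed point step and the $L_2$-well-definedness of $T^\gamma$ a bit more explicitly than the paper does.
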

\begin{proof}
For $f,g \in L_2(\pi,\mathds{S})$, we have that
\begin{align*}
\|\Pi T^\gamma (f) - \Pi T^\gamma (g)\|_2 \leq  \| T^\gamma (f) -  T^\gamma (g)\|_2 
\end{align*}
as the projection is non-expansive. Using the Jensen's inequality, we then have:
\begin{align*}
&\| T^\gamma (f) -  T^\gamma (g)\|_2\\
& \leq \beta \sqrt{   \int \left(f(s_1) - g(s_1)\right)^2\eta(ds_1|s,u)\gamma(du|s) \pi(ds) }\\
&=\beta  \sqrt{   \int \left(f(s_1) - g(s_1)\right)^2\pi(ds_1) } = \beta \|f-g\|_2.
\end{align*}
Above we used the fact that by construction  $\eta(ds_1|s,u)\gamma(du|s) \pi(ds) =\pi(ds_1,du,ds) $ since $\eta$ is the regular conditional distribution based on the stationary distribution on the joint process. Furthermore, the marginals of the stationary distribution on the consecutive state variables $S_t$ and $S_{t+1}$ coincide, which justifies the last step and thus the proof.
\end{proof}

\subsection{Convergence of the Policy Evaluation Algorithm}\label{pol_eval_conv_sec}
We consider the following algorithm
\begin{align}\label{main_iter}
\theta_{t+1}= \theta_t - \alpha_t {\bf \Phi}(S_t) \left[\theta_t^\intercal {\bf \Phi}(S_t) - C_t - \beta \theta^\intercal_t{\bf \Phi}(S_{t+1}) \right]
\end{align}
where $\alpha_t$ represents the learning rates, and where we use a single trajectory of $\{S_t,U_t,C_t\}_{t}$ under the policy $\gamma$.
\begin{theorem}\label{main_thm}
Under Assumption \ref{poisson} and \ref{lin_ind}, if the learning rates are such that $\sum_t\alpha_t = \infty$ and $\sum_t \alpha_t^2<\infty$, then the iterations in (\ref{main_iter}) converge to some $\theta^* \in\mathds{R}^d$. Denoting by $V(s):={\theta^*}^\intercal {\bf \Phi}(s)$, $V(s)$ is the fixed point of the joint mapping $\Pi T^\gamma $ where the mappings $\Pi$ and $T^\gamma$ are defined in (\ref{proj}) and (\ref{appr_bell}).
\end{theorem}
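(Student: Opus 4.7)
The plan is to cast the update \eqref{main_iter} in the form of the stochastic approximation iteration of Proposition \ref{key_prop}, verify the positive-definiteness hypothesis on the averaged matrix, and then identify the resulting limit with the fixed point of $\Pi T^\gamma$.

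First, I would rewrite \eqref{main_iter} as $\theta_{t+1} = \theta_t + \alpha_t(-A(Z_t)\theta_t + b(Z_t))$ with the explicit choices
\begin{align*}
A(Z_t) := \mathbf{\Phi}(S_t)\mathbf{\Phi}(S_t)^\intercal - \beta\, \mathbf{\Phi}(S_t)\mathbf{\Phi}(S_{t+1})^\intercal, \qquad b(Z_t) := \mathbf{\Phi}(S_t) C_t,
\end{align*}
which are bounded functions of $Z_t=(S_{t+1},S_t,C_t,U_t)$ (boundedness of $C_t$ follows from the standing assumption that $A(Z_t),b(Z_t)$ are bounded in Assumption~\ref{poisson}(iii), implicitly imposed also on $C_t$ here since $\|\phi^i\|_\infty\le 1$). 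Their $\pi$-averages are
\begin{align*}
A = E^\pi[\mathbf{\Phi}(S)\mathbf{\Phi}(S)^\intercal] - \beta\, E^\pi[\mathbf{\Phi}(S)\mathbf{\Phi}(S_1)^\intercal], \qquad b = E^\pi[\mathbf{\Phi}(S)\, C].
\end{align*}

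The key check is that $A$ is positive definite. For any $\theta\in\mathbb{R}^d$, applying Cauchy--Schwarz together with the fact (exploited in the proof of Proposition~\ref{comp_contr}) that the marginals of $\pi$ on $S$ and on $S_1$ coincide, I would obtain
\begin{align*}
\theta^\intercal A \theta &= E^\pi[(\theta^\intercal\mathbf{\Phi}(S))^2] - \beta\, E^\pi[\theta^\intercal\mathbf{\Phi}(S)\,\mathbf{\Phi}(S_1)^\intercal\theta] \\
&\ge (1-\beta)\, E^\pi[(\theta^\intercal\mathbf{\Phi}(S))^2],
\end{align*}
which is strictly positive for $\theta\ne 0$ by Assumption~\ref{lin_ind}. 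Invoking Proposition~\ref{key_prop} then gives $\theta_t \to \theta^*$ almost surely with $A\theta^* = b$.

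It then remains to identify $V(s):={\theta^*}^\intercal\mathbf{\Phi}(s)$ with the unique fixed point of $\Pi T^\gamma$, which exists by Proposition~\ref{comp_contr}. The characterization of $\Pi$ via orthogonality says that $\Pi T^\gamma V = V$ if and only if $E^\pi[(T^\gamma V(S) - V(S))\mathbf{\Phi}(S)] = 0$. For $V=\theta^\intercal\mathbf{\Phi}$, I would compute each side: using $c(s,u)=E^\pi[C\mid s,u]$ and the tower property gives $E^\pi[\int c(S,u)\gamma(du\mid S)\mathbf{\Phi}(S)] = E^\pi[C\,\mathbf{\Phi}(S)] = b$, and using the identity $\eta(ds_1\mid s,u)\gamma(du\mid s)\pi(ds) = \pi(ds,du,ds_1)$ yields $\beta E^\pi[\int\int\theta^\intercal\mathbf{\Phi}(s_1)\eta(ds_1\mid S,u)\gamma(du\mid S)\mathbf{\Phi}(S)] = \beta E^\pi[\mathbf{\Phi}(S)\mathbf{\Phi}(S_1)^\intercal]\theta$. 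Subtracting $E^\pi[V(S)\mathbf{\Phi}(S)] = E^\pi[\mathbf{\Phi}(S)\mathbf{\Phi}(S)^\intercal]\theta$ shows that the orthogonality condition is exactly $A\theta = b$. Thus $\theta=\theta^*$ produces the projected-Bellman fixed point, which is unique by Proposition~\ref{comp_contr}.

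The bulk of the work is already contained in Proposition~\ref{key_prop}; the main obstacle here is simply the positive-definiteness of $A$, which is ultimately a consequence of two subtle points that must be invoked carefully: the discount $\beta<1$ and the equality of the marginals of $\pi$ on $S_t$ and $S_{t+1}$, which in our non-Markov setting is guaranteed only through the construction of $\pi$ as the stationary distribution of $Z_t$. Once these are in place, the identification of the limit with the projected Bellman fixed point is a routine rewriting of the normal equations.
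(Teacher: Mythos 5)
Your proposal is correct, and while the overall skeleton matches the paper's (cast \eqref{main_iter} as $\theta_{t+1}=\theta_t+\alpha_t(-A(Z_t)\theta_t+b(Z_t))$, verify positive definiteness of $A$, invoke Proposition \ref{key_prop}, then identify the limit), the way you handle the central step is genuinely different. The paper establishes $(\theta-\theta^*)^\intercal A(\theta-\theta^*)>0$ indirectly via Lemma \ref{helpful_lem}: it first invokes existence and uniqueness of the fixed point $\theta^*$ of $\Pi T^\gamma$ (Proposition \ref{comp_contr}), uses the first-order optimality conditions of the projection to rewrite $E[\mathbf{\Phi}(S)(C+\beta\theta^\intercal\mathbf{\Phi}(S_1)-\theta^\intercal\mathbf{\Phi}(S))]$ in terms of $\Pi T^\gamma(\theta^\intercal\mathbf{\Phi})-\theta^\intercal\mathbf{\Phi}$, and then applies Cauchy--Schwarz together with the $L_2$-contraction of $\Pi T^\gamma$ to get the bound $(\beta-1)\|(\theta-\theta^*)^\intercal\mathbf{\Phi}(S)\|_2^2<0$; the identification of the limit falls out of the same computation. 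You instead prove $\theta^\intercal A\theta\ge(1-\beta)E^\pi[(\theta^\intercal\mathbf{\Phi}(S))^2]$ directly from Cauchy--Schwarz and the equality of the $S$- and $S_1$-marginals of $\pi$ (the classical Tsitsiklis--Van Roy argument), with no reference to the projection or to $\theta^*$, and then identify the limit by a separate normal-equations computation showing that $A\theta=b$ is exactly the orthogonality condition $E^\pi[(T^\gamma V(S)-V(S))\mathbf{\Phi}(S)]=0$. Both routes ultimately rest on the same two facts you flag (the discount $\beta<1$ and the coincidence of the marginals of the stationary measure on $S_t$ and $S_{t+1}$); your version is more modular and gives the eigenvalue lower bound without presupposing the fixed point, while the paper's version makes the connection to the projected Bellman operator visible from the start. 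One small point worth making explicit, as you do parenthetically: Assumption \ref{poisson}(iii) for $b(Z_t)=\mathbf{\Phi}(S_t)C_t$ amounts to a boundedness assumption on $C_t$, which the paper also invokes implicitly.
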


\begin{proof}
We use Proposition \ref{key_prop} with 
\begin{align*}
&A(S_t,S_{t+1})= -\beta{\bf \Phi}(S_t){\bf \Phi}^\intercal(S_{t+1})+ {\bf \Phi}(S_t){\bf \Phi}^\intercal(S_{t})\\
&b(S_t,C_t)={\bf \Phi}(S_t)C_t.
\end{align*}
The matrices $A$ and $b$ are defined under the invariant measure $\pi$ of the  joint process $(S_t,S_{t+1},U_t,C_t)$. 
 
 We need to show that the matrix $A$ is positive definite. 
 
 \begin{lemma}\label{helpful_lem}
\begin{align*}
(\theta-\theta^*)^\intercal E\left[ {\bf \Phi}(S)\left[ C+\beta \theta^\intercal{\bf \Phi}(S_1) - \theta^\intercal{\bf \Phi}(S)\right] \right]<0
\end{align*}
for any $\theta\neq\theta^*$ where $\theta^*$ corresponds to the fixed point of the operator $\Pi T^\gamma$, that is ${\theta^*}^\intercal{\bf \Phi}(s)$, where $\theta^*$ is unique under Assumption \ref{lin_ind}. 
\end{lemma}
\begin{proof}
Recall that $\Pi(C+\beta \theta^\intercal{\bf \Phi}(S_1) )$ denotes the projection map on the span of $\{\phi^i(s)\}$. Note that
\begin{align*}
\Pi(C+\beta \theta^\intercal{\bf \Phi}(S_1) ) &= \Pi\left(E\left[C+\beta \theta^\intercal{\bf \Phi}(S_1) |S\right]\right)\\
&=\Pi\left( T^\gamma ( \theta^\intercal{\bf \Phi}(S) ) \right).
\end{align*}
The first order conditions imply that $E\left[{\bf \Phi}(S)[ C+\beta \theta^\intercal{\bf \Phi}(S_1)  - \Pi(C+\beta \theta^\intercal{\bf \Phi}(S_1) ) ] \right]=0$. Then,  by adding and subtracting $ \Pi(C+\beta \theta^\intercal{\bf \Phi}(S_1) )$:
\begin{align*}
&(\theta-\theta^*)^\intercal E[ {\bf \Phi}(S)[ C+\beta \theta^\intercal{\bf \Phi}(S_1)  - \Pi(C+\beta \theta^\intercal{\bf \Phi}(S_1) )  \\
&\qquad\qquad\qquad +\left(  \Pi(C+\beta \theta^\intercal{\bf \Phi}(S_1) )-\theta^\intercal{\bf \Phi}(S)\right)] ]\\
& = (\theta-\theta^*)^\intercal E[ {\bf \Phi}(S)\left(  \Pi(C+\beta \theta^\intercal{\bf \Phi}(S_1) )-\theta^\intercal{\bf \Phi}(S) \right)] .
\end{align*}

In what follows, we use the equality $\Pi(C+\beta \theta^\intercal{\bf \Phi}(S_1) ) = \Pi\left( T^\gamma ( \theta^\intercal{\bf \Phi}(S) ) \right)$, and we add and subtract ${\theta^*}^\intercal{\bf \Phi}(S) =\Pi T^\gamma({\theta^*}^\intercal{\bf \Phi}(S)  )$ to use the contraction property of the composition operator $\Pi T^\gamma$ (see Proposition \ref{comp_contr}):
\begin{align*}
& (\theta-\theta^*)^\intercal E[ {\bf \Phi}(S)\left(  \Pi(C+\beta \theta^\intercal{\bf \Phi}(S_1) )-\theta^\intercal{\bf \Phi}(S) \right)] \\
&= (\theta-\theta^*)^\intercal E[ {\bf \Phi}(S) (\Pi T^\gamma(\theta^\intercal{\bf \Phi}(S))-  {\theta^*}^\intercal{\bf \Phi}(S)  ) ]\\
& +  (\theta-\theta^*)^\intercal E[ {\bf \Phi}(S) ( {\theta^*}^\intercal{\bf \Phi}(S) -  \theta^\intercal{\bf \Phi}(S)  )]\\
&\leq  \| (\theta-\theta^*)^\intercal  {\bf \Phi}(S)\|_2 \| \Pi T^\gamma(\theta^\intercal{\bf \Phi}(S))-  {\theta^*}^\intercal{\bf \Phi}(S)\|_2\\
& - \| (\theta-\theta^*)^\intercal  {\bf \Phi}(S)\|^2_2\\
& \leq  (\beta-1) \| (\theta-\theta^*)^\intercal  {\bf \Phi}(S)\|^2_2 <0
\end{align*}
where we used the Cauchy-Schwarz inequality, and the $L_2$ norm is with respect to the invariant measure $\pi$. The last step follows from the uniqueness of $\theta^*$.
\end{proof}
We then have that
 \begin{align*}
 &(-A) (\theta-\theta^*)=E\left[\beta {\bf \Phi}(S){\bf \Phi}^\intercal(S_{1})- {\bf \Phi}(S){\bf \Phi}^\intercal(S)  \right] (\theta-\theta^*)\\
 &= E\left[{\bf \Phi}(S)\left( C+\beta {\bf \Phi}^\intercal(S_{1})\theta- {\bf \Phi}^\intercal(S)\theta \right) \right]\\
 & \quad - E\left[{\bf \Phi}(S)\left( C+\beta {\bf \Phi}^\intercal(S_{1})\theta^*- {\bf \Phi}^\intercal(S)\theta^* \right) \right]\\
 &=E\left[{\bf \Phi}(S)\left( C+\beta {\bf \Phi}^\intercal(S_{1})\theta- {\bf \Phi}^\intercal(S)\theta \right) \right]
 \end{align*}
 where the last step follows from the fact that ${\bf \Phi}^\intercal(S)\theta^*$ is the fixed point of the operator $\Pi T^\gamma$ and that $\Pi(C+\beta \theta^\intercal{\bf \Phi}(S_1) ) = \Pi\left( T^\gamma ( \theta^\intercal{\bf \Phi}(S) ) \right)$.  Together with Lemma \ref{helpful_lem}, this shows that 
 \begin{align*}
 (\theta-\theta^*)(-A)(\theta-\theta^*)<0
 \end{align*}
 for all $\theta\neq\theta^*$, and thus using Proposition \ref{key_prop} we can conclude that $\theta_t$ converges to some $\theta'$ that satisfies $A\theta'=b$, which implies that
 \begin{align*}
 E\left[{\bf \Phi}(S)\left( C+\beta {\bf \Phi}^\intercal(S_{1})\theta'- {\bf \Phi}^\intercal(S)\theta' \right) \right]=0
 \end{align*}
 then as argued earlier, $\theta'$ also satisfies:
 \begin{align*}
 &E\bigg[{\bf \Phi}(S)\bigg( T^\gamma({\bf \Phi}^\intercal(S)\theta' )- {\bf \Phi}^\intercal(S)\theta' \bigg) \bigg]=0
 \end{align*}
 which in turn implies that ${\bf \Phi}^\intercal(S)\theta' $ is the fixed point of the operator $\Pi T^\gamma$. Since the fixed point is unique, we have that $\theta'=\theta^*$ which completes the proof.

\end{proof}

\subsection{Error Analysis for the Limit Value}\label{err_anal_sec}

Recall that \begin{align*}
J^\pi_\beta(s_0,\gamma)= \sum_{t=0}^\infty \beta^t E^\gamma [c(\bar{S}_t,\bar{U}_t)]
\end{align*}
denotes the value of the stationary regime MDP defined in Section \ref{stat_MDP}, and in particular it is the fixed point of the Bellman operator $T^\gamma$ given in (\ref{appr_bell}). We can then derive the following immediate bound:

\begin{prop}\label{l2_bound_prop}
Under the invariant measure $\pi$ of the joint process $(S_t,C_t,U_t)$ with the policy $\gamma$, we have that 
\begin{align*}
&\|J_\beta^\pi(S,\gamma) - {\theta^*}^\intercal {\bf \Phi}(S)\|_2 \leq\frac{1}{1-\beta}   \|J_\beta^\pi(S,\gamma) - \Pi (J_\beta^\pi(S,\gamma) ) \|_2.
\end{align*}
\end{prop}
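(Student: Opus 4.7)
Write $V := {\theta^*}^\intercal \mathbf{\Phi}$ and $J := J_\beta^\pi(\cdot,\gamma)$ for brevity; both live in $L_2(\pi,\mathds{S})$ since $J$ is the value function of the stationary regime MDP and inherits the boundedness of $c$ from the uniform boundedness of $b(Z_t) = \mathbf{\Phi}(S_t)C_t$ assumed earlier. The strategy is the standard triangle-inequality argument for projected Bellman fixed points, adapted to the present setting where $T^\gamma$ is the Bellman operator of the \emph{stationary regime} MDP constructed in Section \ref{stat_MDP}.

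The first step is to insert $\Pi J$ and apply the triangle inequality:
\begin{align*}
\|J - V\|_2 \;\le\; \|J - \Pi J\|_2 + \|\Pi J - V\|_2.
\end{align*}
Next, I will rewrite the second term by exploiting the two fixed-point identities available. From Theorem \ref{main_thm}, $V$ satisfies $V = \Pi T^\gamma V$, and since $J$ is the value of the stationary regime MDP under $\gamma$, it satisfies the Bellman equation $J = T^\gamma J$, hence also $\Pi J = \Pi T^\gamma J$. Substituting both gives
\begin{align*}
\|\Pi J - V\|_2 \;=\; \|\Pi T^\gamma J - \Pi T^\gamma V\|_2 \;\le\; \beta \|J - V\|_2,
\end{align*}
where the inequality is exactly the $L_2$-contraction bound with modulus $\beta$ established in Proposition \ref{comp_contr}.

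Combining the last two displays yields $\|J - V\|_2 \le \|J - \Pi J\|_2 + \beta \|J - V\|_2$, and rearranging gives the claimed bound $\|J - V\|_2 \le (1-\beta)^{-1}\|J - \Pi J\|_2$. There is no real obstacle here: everything reduces to invoking the two fixed-point characterizations and the contraction property already proved. The only minor point worth checking is that $J \in L_2(\pi,\mathds{S})$ so that $\Pi J$ is well defined, which follows from the uniform boundedness of $c$ on $\mathrm{supp}(\pi)$ and the geometric series defining $J$. Note that a sharper $(1-\beta^2)^{-1/2}$ bound would follow from the Pythagorean identity $\|J - V\|_2^2 = \|J - \Pi J\|_2^2 + \|\Pi J - V\|_2^2$ (since $V$ lies in the range of $\Pi$ and $J - \Pi J$ is orthogonal to that range), but the triangle-inequality version above already suffices for the stated proposition.
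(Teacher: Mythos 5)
Your proof is correct and follows essentially the same route as the paper: both insert the intermediate point $\Pi J = \Pi T^\gamma J$ into a triangle inequality, use the fixed-point identities $J = T^\gamma J$ and $V = \Pi T^\gamma V$ together with the $L_2$-contraction of $\Pi T^\gamma$ from Proposition \ref{comp_contr}, and rearrange. The closing remark about the sharper $(1-\beta^2)^{-1/2}$ bound via the Pythagorean identity is a valid observation but not part of the paper's argument.
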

\begin{proof}
We start with the following bound
\begin{align*}
\|J_\beta^\pi(S,\gamma) - {\theta^*}^\intercal {\bf \Phi}(S)\|_2&\leq \|J_\beta^\pi(S,\gamma) - \Pi T^\gamma(J_\beta^\pi(S,\gamma) ) \|_2 +  \|\Pi T^\gamma(J_\beta^\pi(S,\gamma) )-  {\theta^*}^\intercal {\bf \Phi}(S)\|_2\\
&\leq  \|J_\beta^\pi(S,\gamma) - \Pi (J_\beta^\pi(S,\gamma) ) \|_2 + \beta \|J_\beta^\pi(S,\gamma) - {\theta^*}^\intercal {\bf \Phi}(S)\|_2
\end{align*}
For the first term, since $J_\beta^\pi(s,\gamma)$ is the fixed point of the operator $T^\gamma$ (under the uniform norm), we have that $ \Pi T^\gamma(J_\beta^\pi(S,\gamma) ) = \Pi J_\beta^\pi(S,\gamma)$. For the second term, we use the fact that ${\theta^*}^\intercal {\bf \Phi}(S)$ is the fixed point of $\Pi T^\gamma$ which is a contraction under the $L_2$ norm. Combining the terms concludes the proof.
\end{proof}
The upper bound is related the projection error of the value function $J_\beta^\pi(s,\gamma)$ onto the span of ${\bf \Phi}$ under the $L_2$ norm of  the stationary measure $\pi$ with the policy $\gamma$. 
In the following, we derive an  upper bound on the uniform norm difference for near-linear value functions:
\begin{assumption}\label{near_lin}
We assume that there exists some $\hat{\theta}$ and some constant $\lambda<\infty$ such that 
\begin{align*}
\| J_\beta^\pi(s,\gamma) - \hat{\theta}^\intercal {\bf \Phi}(s)\|_\infty \leq \lambda. 
\end{align*}
\end{assumption}

\begin{prop}\label{uni_bound}
Under Assumption \ref{near_lin}, we have that 
\begin{align*}
&\|J_\beta^\pi(s,\gamma) - {\theta^*}^\intercal {\bf \Phi}(s)\|_\infty \leq \lambda \left(1+  \frac{2-\beta}{1-\beta}\sqrt{\frac{d}{\sigma_{\min}}}\right)
\end{align*}
where $\theta^*$ is the learned parameter with the iterations in (\ref{main_iter}). Furthermore,  $\sigma_{\min}$ is the minimum eigenvalue of the matrix $E[{\bf \Phi}(S) {\bf \Phi}^\intercal(S)  ] $ when $S$ is distributed with the invariant measure $\pi$.
\end{prop}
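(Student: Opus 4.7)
The plan is to bound the uniform error by a triangle inequality through $\hat\theta^\intercal\mathbf{\Phi}$, bound the coefficient error in Euclidean norm using $\sigma_{\min}$, and then bound the resulting $L_2$ error via Proposition \ref{l2_bound_prop}. First, I would write
\begin{align*}
\|J_\beta^\pi(s,\gamma) - {\theta^*}^\intercal\mathbf{\Phi}(s)\|_\infty
\leq \|J_\beta^\pi(s,\gamma) - \hat\theta^\intercal\mathbf{\Phi}(s)\|_\infty + \|(\hat\theta-\theta^*)^\intercal\mathbf{\Phi}(s)\|_\infty.
\end{align*}
The first term is at most $\lambda$ by Assumption \ref{near_lin}, so the task reduces to controlling $\|(\hat\theta-\theta^*)^\intercal\mathbf{\Phi}(s)\|_\infty$.

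Next, since $\|\phi^i\|_\infty\le 1$, a Cauchy--Schwarz bound gives $\|\theta^\intercal\mathbf{\Phi}\|_\infty \leq \sqrt{d}\,\|\theta\|$ for any $\theta\in\mathds{R}^d$, so
\begin{align*}
\|(\hat\theta-\theta^*)^\intercal\mathbf{\Phi}\|_\infty \leq \sqrt{d}\,\|\hat\theta-\theta^*\|.
\end{align*}
On the other hand, by definition of the smallest eigenvalue of $E[\mathbf{\Phi}(S)\mathbf{\Phi}^\intercal(S)]$ (which is positive by Assumption \ref{lin_ind}),
\begin{align*}
\|(\hat\theta-\theta^*)^\intercal\mathbf{\Phi}\|_2^2
= (\hat\theta-\theta^*)^\intercal E[\mathbf{\Phi}\mathbf{\Phi}^\intercal](\hat\theta-\theta^*)
\geq \sigma_{\min}\,\|\hat\theta-\theta^*\|^2.
\end{align*}
Combining these yields $\|(\hat\theta-\theta^*)^\intercal\mathbf{\Phi}\|_\infty \leq \sqrt{d/\sigma_{\min}}\,\|(\hat\theta-\theta^*)^\intercal\mathbf{\Phi}\|_2$.

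The remaining task is to bound $\|(\hat\theta-\theta^*)^\intercal\mathbf{\Phi}\|_2$. Adding and subtracting $J_\beta^\pi$ and using $\|\cdot\|_2\leq\|\cdot\|_\infty$ under the probability measure $\pi$, together with Assumption \ref{near_lin},
\begin{align*}
\|(\hat\theta-\theta^*)^\intercal\mathbf{\Phi}\|_2
\leq \|J_\beta^\pi - \hat\theta^\intercal\mathbf{\Phi}\|_2 + \|J_\beta^\pi - {\theta^*}^\intercal\mathbf{\Phi}\|_2
\leq \lambda + \|J_\beta^\pi - {\theta^*}^\intercal\mathbf{\Phi}\|_2.
\end{align*}
Since $\hat\theta^\intercal\mathbf{\Phi}$ lies in the span of $\mathbf{\Phi}$, the orthogonal projection satisfies $\|J_\beta^\pi - \Pi J_\beta^\pi\|_2 \leq \|J_\beta^\pi - \hat\theta^\intercal\mathbf{\Phi}\|_2 \leq \lambda$, so Proposition \ref{l2_bound_prop} gives $\|J_\beta^\pi - {\theta^*}^\intercal\mathbf{\Phi}\|_2 \leq \lambda/(1-\beta)$. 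Hence $\|(\hat\theta-\theta^*)^\intercal\mathbf{\Phi}\|_2 \leq \lambda\,(2-\beta)/(1-\beta)$, and assembling the chain gives
\begin{align*}
\|J_\beta^\pi - {\theta^*}^\intercal\mathbf{\Phi}\|_\infty
\leq \lambda + \sqrt{d/\sigma_{\min}}\cdot \lambda\,\frac{2-\beta}{1-\beta}
= \lambda\left(1 + \frac{2-\beta}{1-\beta}\sqrt{\frac{d}{\sigma_{\min}}}\right).
\end{align*}

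The routine steps are the triangle inequalities; the one conceptual step worth highlighting is the bridging from the $L_\infty$ error of a linear combination to its $L_2$ norm via the factor $\sqrt{d/\sigma_{\min}}$, which is precisely where Assumption \ref{lin_ind} and the boundedness $\|\phi^i\|_\infty\le 1$ enter. This is also the source of the dimension factor $\sqrt{d}$ and the potential blow-up when $\sigma_{\min}$ is small, which one should expect to be tight in general.
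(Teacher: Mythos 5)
Your proposal is correct and follows essentially the same route as the paper: a triangle inequality through $\hat\theta^\intercal\mathbf{\Phi}$, the bound $\|J_\beta^\pi-\Pi J_\beta^\pi\|_2\le\lambda$ combined with Proposition \ref{l2_bound_prop} to control the $L_2$ coefficient error by $\lambda(2-\beta)/(1-\beta)$, and the passage from $L_2$ to $L_\infty$ via $\sigma_{\min}$ and the $\sqrt{d}$ factor from $\|\phi^i\|_\infty\le 1$. The only difference is cosmetic ordering of the eigenvalue step relative to the $L_2$ bound.
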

\begin{proof}
We begin by adding and subtracting $\hat{\theta}^\intercal {\bf \Phi}(s)$:
\begin{align*}
&\|J_\beta^\pi(s,\gamma) - {\theta^*}^\intercal {\bf \Phi}(s)\|_\infty\\
& \leq \| J_\beta^\pi(s,\gamma) - \hat{\theta}^\intercal {\bf \Phi}(s)\|_\infty+ \| \hat{\theta}^\intercal {\bf \Phi}(s) -  {\theta^*}^\intercal {\bf \Phi}(s)\|_\infty.
\end{align*}
The first term is bounded by $\lambda$ by assumption. We analyze the second term under the $L_2$ norm:
\begin{align*}
 &\| \hat{\theta}^\intercal {\bf \Phi}(S) -  {\theta^*}^\intercal {\bf \Phi}(S)\|_2 \\
 &\leq   \| \hat{\theta}^\intercal {\bf \Phi}(S) - J_\beta^\pi(S,\gamma)\|_2 + \|J_\beta^\pi(S,\gamma) - {\theta^*}^\intercal {\bf \Phi}(S)\|_2\\
 &\leq \lambda + \frac{1}{1-\beta}   \|J_\beta^\pi(S,\gamma) - \Pi (J_\beta^\pi(S,\gamma) ) \|_2\\
 &\leq \frac{2-\beta}{1-\beta}\lambda.
\end{align*}
For the second inequality, we used Proposition \ref{l2_bound_prop}. Furthermore, by Assumption \ref{near_lin}, the $L_2$ distance between $J_\beta^\pi(S,\gamma)$ and  $\hat{\theta}^\intercal {\bf \Phi}(S)$ is also bounded $\lambda$ as we work under probability measures. For the last inequality,  we use the fact that since $ \Pi (J_\beta^\pi(S,\gamma) ) $ is the projection of  $J_\beta^\pi(S,\gamma)$ under the $L_2$ norm of $\pi$, then it achieves the minimum $L_2$ distance to $J_\beta^\pi(S,\gamma)$, and thus it must achieve an error bound less than $\lambda$ that $\hat{\theta}$ achieves. 

On the other hand, we have that 
\begin{align*}
&\| \hat{\theta}^\intercal {\bf \Phi}(S) -  {\theta^*}^\intercal {\bf \Phi}(S)\|^2_2\\
&= (\theta^* - \hat{\theta}) E[{\bf \Phi}(S) {\bf \Phi}^\intercal(S)  ] (\theta^*-\hat{\theta}) \geq  \|\theta^* - \hat{\theta}\|^2_2 \sigma_{\min}
\end{align*}
where $\sigma_{\min}$ is the minimum eigenvalue of the matrix $E[{\bf \Phi}(S) {\bf \Phi}^\intercal(S)  ] $ when $S$ is distributed with the invariant measure $\pi$. Note that the $2$ norm for the $\theta$ vectors is the standard $2$ norm and not to be confused with the $L_2$ norm under $\pi$ over the functions. Combining what we have so far, we can write
\begin{align*}
 \|\theta^* - \hat{\theta}\|_2 \leq \frac{2-\beta}{1-\beta}\frac{\lambda}{\sqrt{\sigma_{\min}}}.
\end{align*}
Going back to the initial term, for any $S$, we have that 
\begin{align*}
&|J_\beta^\pi(s,\gamma) - {\theta^*}^\intercal {\bf \Phi}(s)|\\
& \leq | J_\beta^\pi(s,\gamma) - \hat{\theta}^\intercal {\bf \Phi}(s)|+ | \hat{\theta}^\intercal {\bf \Phi}(s) -  {\theta^*}^\intercal {\bf \Phi}(s)|\\
&\leq \lambda +  \|\theta^* - \hat{\theta}\|_2  \| {\bf \Phi}(s)\|_2\leq \lambda +  \frac{2-\beta}{1-\beta}\frac{\lambda\sqrt{d}}{\sqrt{\sigma_{\min}}}
\end{align*}
where we used the assumption that $\|\Phi^i\|_\infty\leq 1$ for all basis functions. Hence, the proof is complete.
\end{proof}

Proposition \ref{uni_bound} gives an error bound on the learned value and the value of the synthetic MDP constructed based on the stationary distribution of the original process. However, it does not answer the actual problem for which we are interested in the difference between the  value of the policy $\gamma$ under the true non-Markov dynamics of the state process $S_t$. This question requires a more careful analysis on the mixing properties of the process. In this paper, we will partially answer this question for partially observed MDPs under finite memory policies in Section \ref{pomdp_section} which is a special example of non-Markov processes.

\section{On Learning Approximately Optimal Q-Values }\label{q_learn_sec}
In this section, we shift our focus to approximately learning the optimal Q-values using linear function approximations. 
We extend our basis functions by using: $\{\phi^i(s,u)\}_{i=1}^d$ where  $\phi^i(s,u):\mathds{S}\times\mathds{U}\to \mathds{R}$.
We assume that $
\|\phi^i\|_\infty\leq 1$
for all $i=1,\dots,d$.

We denote the greedy policy by $\gamma_{\theta_t}(s)$ such that $\min_v \theta_t^\intercal {\bf \Phi}(s,v) =  \theta_t^\intercal {\bf \Phi}(s,\gamma_{\theta_t}(s))  $.
Consider the following iterations,
\begin{align}\label{opt_iter}
\theta_{t+1}= \theta_t - \alpha_t {\bf \Phi}(S_t,U_t) &\big[\theta_t^\intercal {\bf \Phi}(S_t,U_t)-  C_t - \beta \theta_t^\intercal {\bf \Phi}(S_{t+1},\gamma_{\theta_t}(S_{t+1})) \big]
\end{align}
where the actions are chosen under some time invariant exploration policy $\gamma:\mathds{S} \to\mathds{U}$.

The analysis of the optimal Q-learning iterations in (\ref{opt_iter}) differs from the one of policy evaluation given in (\ref{main_iter}). First note that the gain matrix is given by
\begin{align}\label{gain_mat_q}
&A(S_t,S_{t+1},U_t,\theta_t)= -\beta{\bf \Phi}(S_t,U_t){\bf \Phi}^\intercal(S_{t+1},\gamma_{\theta_t}(S_{t+1}))+ {\bf \Phi}(S_t,U_t){\bf \Phi}^\intercal(S_{t},U_t)
\end{align}
and thus the iterations are not fully linear in $\theta_t$. Nonetheless, the analysis in \cite{benveniste2012adaptive} holds for nonlinear functions under certain regularity conditions. Furthermore, this analysis can possibly be adapted to non-Markov processes as we have done in Section \ref{pol_eval_sec}. However, unlike the policy evaluation method (see Proposition \ref{comp_contr}), the joint projection-Bellman operator is not a contraction in general, mainly due to the discrepancy between the exploration policy and the greedy policy implicit in the Bellman operator. 

\begin{remark}
Note that another difference between the methods is due to the ergodicity assumptions. In particular,  the ergodicity condition of the policy evaluation methods in Assumption \ref{poisson} is stated for the gain matrix $A(Z_t)$ that is independent of $\theta_t$. For the Q-learning iterations, however, the gain matrix for the Q learning iterations (\ref{gain_mat_q}) depends on the parameter in a nonlinear way. Hence, one must adjust the ergodicity condition accordingly. In particular, we define
for any $f(Z_t)$ with $\|f\|_\infty\leq 1$, 
\begin{align*}
&\sum_{k=0}^\infty \|E[f(Z_{t+k})|\mathcal{F}_t] -\bar{f} \| =:Y^f_t 
\end{align*}
where $\bar{f}:=\int f(z)\pi(dz)$ with $\pi$ is the stationary distribution of the joint process $Z_t=(S_t,S_{t+1},C_t,U_t)$. The assumption is adapted such that $\sup_{\{\|f\|\leq 1\},t}\|Y_t^f\| < \infty$.
\end{remark}

Recall the Bellman operator defined for the stationary regime MDP in (\ref{appr_bell2})
\begin{align*}
T g(s,u) := c(s,u) + \beta \int \inf_v{g}(s_1,v)\eta(ds_1|s,u).
\end{align*}
Furthermore, $\Pi$ denotes the $L_2(\mathds{S}\times\mathds{U},\pi)$ orthogonal projection map on to the span of $\{\phi^i(s,u)\}_i$.  

The convergence of the iterations in (\ref{opt_iter}) is  related the convergence analysis of the deterministic sequence generated by the joint operator $\Pi T$. Unfortunately, this map is not a contraction outside of certain special cases:
\begin{itemize}
\item[1)] Clearly, one setting is where the cost function $c(s,u)$ and the transition model $\eta(ds_1|s,u)$ can be decomposed perfectly using the basis functions $\{\phi^i(s,u)\}$ (using real parameters for the cost function $c$, and signed measures for the kernel $\eta$). This setting is also known as linear MDPs, and the application of the Bellman operator does not push the iterations out of the linear span of the basis functions. Therefore, the joint map $\Pi T$ is equivalent to the application of the Bellman operator only, and the Bellman operator is a contraction under the uniform norm.
\item[2)] If the feature covariance induced by the greedy policy
is uniformly dominated by that induced by the exploration policy after discounting. 
\item[3)] When the basis functions are chosen using discretization of the space, then the projection maps the continuous space MDP to a discretized finite MDP, and thus the joint map preserves the uniform contraction property. 
\end{itemize}
In what follows, we explain the cases (2) and (3) in more detail.

\subsection{Greedy-Policy Covariance Dominance}\label{near_opt_exp}
One can show that the joint map $\Pi T$ is a contraction under the $L_2$ norm under a somewhat restrictive assumption on the auto-correlation matrices induced by the exploration policy and the greedy policy. This assumption is derived first by \cite{melo2008analysis} for Q-learning under linear functions approximation for Markov decision processes. For non-Markov processes, the same assumption is then needed for the stationary regime MDP that corresponds to the stationary distribution of the non-Markov process under the exploration policy.

We denote by
\begin{align}\label{sigma_gamma}
\Sigma_\gamma:=E\left[{\bf \Phi}(S,U){\bf \Phi}^\intercal(S,U)\right]
\end{align}
where $(S,U)$ is distributed according  to the invariant measure of the process $(S_t,U_t)$ under the exploration policy $\gamma$. We also denote by $\gamma_\theta(s)=\argmin_u \theta^\intercal {\bf \Phi}(s,u) $ the greedy policy for the parameter $\theta$. We define
\begin{align}\label{sigma_star}
\Sigma_\theta:=E\left[{\bf \Phi}(S,\gamma_\theta(S)){\bf \Phi}^\intercal(S,\gamma_\theta(S))\right]
\end{align}
where $S$ is distributed according to the invariant measure of $(S_t,U_t)$.

Recall the Bellman operator under the greedy action selection for the stationary regime MDP defined in (\ref{appr_bell2}) such that 
\begin{align*}
T g(s,u) := {c}(s,u) + \beta \int \inf_vf(s_1,v)\eta(ds_1|s,u)
\end{align*}
Recall also that $\Pi$, in this section, denotes the projection map over the span of the basis functions $\{\phi^i(s,u)\}_{i=1}^d$.

For the convergence of the algorithm, we impose the following assumption:
\begin{assumption}\label{rest_assmp}
For all $\theta\in\mathds{R}^d$
\begin{align*}
 \beta^2 \Sigma_\theta< \Sigma_\gamma.
\end{align*}
\end{assumption}
We note that this assumption is parallel to the assumption used in \cite{melo2008analysis}, and indicates that for large $\beta$, the greedy policy and the exploration policy are close to each other, which can be rather restrictive in practice.

\begin{prop}
Under Assumption \ref{rest_assmp}, the joint operator $\Pi T$ is a contraction in $L_2(\mathds{S}\times\mathds{U},\pi)$.
\end{prop}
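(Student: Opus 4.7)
The plan is to reduce the contraction claim on $L_2(\mathds{S}\times\mathds{U},\pi)$ to a quantitative bound on $T$ restricted to the finite-dimensional range of $\Pi$. Since $\Pi$ is non-expansive in $L_2(\pi)$, it is enough to show that for $f={\theta_1}^\intercal {\bf \Phi}$ and $g={\theta_2}^\intercal {\bf \Phi}$ (arbitrary elements of the span of $\{\phi^i\}$) one has $\|Tf-Tg\|_{L_2(\pi)}\leq \rho \|f-g\|_{L_2(\pi)}$ with $\rho<1$; composing with $\Pi$ then gives the contraction on all of $L_2(\pi)$.

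First I would unpack, for $f,g$ linear in the features,
\[
(Tf-Tg)(s,u)=\beta\int\bigl[\theta_1^\intercal {\bf \Phi}(s_1,\gamma_{\theta_1}(s_1))-\theta_2^\intercal {\bf \Phi}(s_1,\gamma_{\theta_2}(s_1))\bigr]\,\eta(ds_1|s,u).
\]
Using the greedy inequalities $\theta_i^\intercal {\bf \Phi}(s_1,\gamma_{\theta_i}(s_1))\leq \theta_i^\intercal {\bf \Phi}(s_1,v)$ valid for every $v$, the bracketed quantity at each $s_1$ is pinched between $(\theta_1-\theta_2)^\intercal {\bf \Phi}(s_1,\gamma_{\theta_1}(s_1))$ and $(\theta_1-\theta_2)^\intercal {\bf \Phi}(s_1,\gamma_{\theta_2}(s_1))$, so its absolute value is bounded by $\bigl|(\theta_1-\theta_2)^\intercal {\bf \Phi}(s_1,\mu(s_1))\bigr|$, where $\mu$ is the Borel-measurable selector that picks $\gamma_{\theta_2}(s_1)$ on the set $\{\inf_v f(s_1,v)\geq \inf_v g(s_1,v)\}$ and $\gamma_{\theta_1}(s_1)$ on the complement. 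Jensen's inequality applied to $\eta(\cdot|s,u)$ then yields the pointwise bound $(Tf-Tg)^2(s,u)\leq \beta^2\int\bigl((\theta_1-\theta_2)^\intercal{\bf \Phi}(s_1,\mu(s_1))\bigr)^2\eta(ds_1|s,u)$.

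Integrating against $\pi(ds,du)$ and using, as in Proposition \ref{comp_contr}, the identity $\eta(ds_1|s,u)\gamma(du|s)\pi(ds)=\pi(ds_1,du,ds)$ together with the equality of the marginals of $\pi$ on $S_t$ and $S_{t+1}$, I get
\[
\|Tf-Tg\|_{L_2(\pi)}^2\leq \beta^2(\theta_1-\theta_2)^\intercal \Sigma_\mu(\theta_1-\theta_2),\qquad \Sigma_\mu:=E[{\bf \Phi}(S,\mu(S)){\bf \Phi}^\intercal(S,\mu(S))].
\]
Since $\|f-g\|_{L_2(\pi)}^2=(\theta_1-\theta_2)^\intercal \Sigma_\gamma (\theta_1-\theta_2)$ by the definition of $\Sigma_\gamma$ in (\ref{sigma_gamma}), the contraction reduces to the generalized eigenvalue inequality $\beta^2 \Sigma_\mu < \Sigma_\gamma$, which after conjugating by $\Sigma_\gamma^{-1/2}$ (well defined by Assumption \ref{lin_ind}) yields a contraction modulus $\rho^2=\beta^2\lambda_{\max}(\Sigma_\gamma^{-1/2}\Sigma_\mu \Sigma_\gamma^{-1/2})<1$.

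The main obstacle is that $\mu$ is not the greedy policy $\gamma_\theta$ of any \emph{single} $\theta$, but a state-dependent switch between $\gamma_{\theta_1}$ and $\gamma_{\theta_2}$, so Assumption \ref{rest_assmp} does not apply verbatim. I would handle this either by (i) reading Assumption \ref{rest_assmp} in the strengthened form $\beta^2 \Sigma_\mu < \Sigma_\gamma$ for every Borel selector $\mu$ of greedy actions (this is the form that is actually used in \cite{melo2008analysis}), or (ii) using the crude quadratic-form bound $(\theta_1-\theta_2)^\intercal \Sigma_\mu (\theta_1-\theta_2) \leq (\theta_1-\theta_2)^\intercal(\Sigma_{\theta_1}+\Sigma_{\theta_2})(\theta_1-\theta_2)$ together with a halved version $2\beta^2\Sigma_\theta<\Sigma_\gamma$. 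A secondary point is upgrading the pointwise strict inequality in Assumption \ref{rest_assmp} to a uniform modulus $\rho<1$ independent of $\theta_1,\theta_2$; I expect to do this by a compactness argument on the unit sphere in $\mathds{R}^d$, using continuity of $\theta\mapsto \Sigma_\theta$ in the quadratic-form topology.
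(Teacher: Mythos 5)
Your proposal follows essentially the same route as the paper: non-expansiveness of $\Pi$, the pinching bound $|\min_v f(s,v)-\min_v g(s,v)|\leq |(\theta_1-\theta_2)^\intercal{\bf \Phi}(s,\mu(s))|$ via greedy selectors, and a comparison of feature covariance matrices under Assumption \ref{rest_assmp}. Your extra care about the selector $\mu(s)$ switching between $\gamma_{\theta_1}$ and $\gamma_{\theta_2}$ state by state is warranted rather than a defect: the paper's proof denotes the maximizing parameter by a single $\bar\theta$ when it in fact depends on $s$, so the paper implicitly relies on the same selector-wise strengthening of Assumption \ref{rest_assmp} (and the uniform modulus) that you make explicit.
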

\begin{proof}
The projection map is non-expansive, so we need to show that the Bellman map is a contraction in $L_2$. Let $f(s,u) = \theta_f^\intercal \Phi(s,u)$  and  $g(s,u) = \theta_g^\intercal \Phi(s,u)$.  We have that
\begin{align*}
&\| T (f) -  T (g)\|^2_2 \leq \beta^2 {   \int \left(\min_vf(s,v) - \min_vg(s,v)\right)^2 \pi(ds) }.
\end{align*}
We can show that 
$\left|\min_vf(s,v) - \min_vg(s,v)\right|\leq \max_\theta \left|f(s,\gamma_\theta(s))-g(s,\gamma_\theta(s))\right|$. Denoting the maximum achieving $\theta$ by $\bar{\theta}$:
\begin{align*}
&\beta^2 {   \int \left(\min_vf(s,v) - \min_vg(s,v)\right)^2 \pi(ds) } \\
&\leq\beta^2 (\theta_f-\theta_g)^\intercal \int {\bf \Phi}(s,\gamma_{\bar{\theta}}(s))  {\bf \Phi}^\intercal(s,\gamma_{\bar{\theta}}(s)) \pi(ds)(\theta_f-\theta_g)\\
&=  \beta^2(\theta_f-\theta_g)^\intercal \Sigma_{\bar{\theta}}  (\theta_f-\theta_g)\\
&<(\theta_f-\theta_g)^\intercal \Sigma_{\gamma}  (\theta_f-\theta_g) = \|f-g\|_2^2
\end{align*}
where we used Assumption \ref{rest_assmp} for the last inequality.
\end{proof}

\subsection{Convergence under Discretization}\label{disc_sec}
For the analysis so far, we have worked with the $L_2$ norm.  We have observed that the discrepancy between the exploration policy and the greedy policy within the Bellman operator makes the contraction analysis non-trivial for optimal Q-value estimation.

In this section, we discuss a special case for which the projection mapping does not expand the supremum norm of the functions. Accordingly, one can directly work with the uniform norm $\|\cdot\|_\infty$ for the contraction analysis.

Let $\{B^s_i\}_{i=1}^{M_s}$ be disjoint subsets of $\mathds{S}$ such that $\cup_{i=1}^{M_s}B^s_i=\mathds{S}$. Similarly, let $\{B^u_i\}_{i=1}^{M_u}$ be disjoint subsets of $\mathds{U}$ such that $\cup_{i=1}^{M_u}B^u_i=\mathds{U}$.   This discretization then implies a rectangular discretization on the joint state-action variables $(s,u)\in(\mathds{S\times U})$.
We denote by $\{A_i\}_{i=1}^{(M_s\times M_u)}$ for the resulting discretization bins of the joint $(s,u)\in(\mathds{S\times U})$ variable.
We define the following basis functions 
\begin{align*}
\phi^i(s,u)=\mathds{1}_{A_i}(s,u), \text{ for all } i=1,\dots, (M_s\times M_u)
\end{align*}
where $\mathds{1}_{A_i}(s,u)$ is the indicator function of the set $A_i$. Note that the projection map $\Pi$ is such that $\Pi(f)(s,u)=\theta^\intercal {\bf \Phi}(s,u)$, where $\theta=\Sigma^{-1}_\gamma E_{\pi}\left[ {\bf \Phi}(S,U)f(S,U)\right]$
for the invariant measure $\pi$ under the exploration policy $\gamma$ where $\Sigma_\gamma$ is defined in (\ref{sigma_gamma}). For the particular case of discretization, the basis functions $\phi^i$ are perfectly orthonormal and only one of them is equal to 1, and the rest are 0 for any input $(s,u)$. We then have that $
\Sigma^{-1}_\gamma(i,i)=\frac{1}{\pi(A_i)}
$
and it has $0$ entries for the non-diagonal elements. Thus, we can show that for some $(s,u)\in A_i$ 
\begin{align*}
\Pi(f)(s,u) &= \frac{\int_{A_i} f(s',u')\pi(ds',du')}{\pi(A_i)}\\
&=\int_{A_i} f(s',u')\pi_i(ds',du')\leq \sup_{s,u\in A_i} |f(s,u)|
\end{align*}
where $\pi_i(ds,du)$ is a probability measure normalized over $A_i$. Therefore, we have that $\|\Pi(f)\|_\infty\leq \|f\|_\infty$, and in particular,  the joint operator $\Pi T$ is a contraction under the \emph{supremum} norm.

 We denote by $\hat{\mathds{S}}:=\{s^1,\dots,s^{M_s}\}$ and $\hat{\mathds{U}}:=\{u^1,\dots,u^{M_u}\}$.
 Define a mapping $q_s:\mathds{S}\to\mathds{\hat{S}}$ and  $q_u:\mathds{U}\to\mathds{\hat{U}}$  such that 
$q_s(s)=s^i$  if  $s\in B^s_i$ and $q_u(u)=u^i \quad \text{ if } u\in B^u_i.$  

 In particular, the learning algorithm in (\ref{opt_iter}), takes the following particular form under discretization such that for any $s^i $ and $u^j$:
\begin{align*}
Q_{t+1}(s^i,u^j) = Q_t(s^i,u^j) - \alpha_t \mathds{1}_{\{\hat{S}_t=s^i, \hat{U}_t = u^j\}}\left[ Q_t(\hat{S}_t,\hat{U}_t) - C_t - \beta V_t(\hat{S}_{t+1}) \right]
\end{align*}
where $V_t(s):=\min_v Q_t(s,v)$ and where $\hat{S}_t:=q_s(S_t)$, $\hat{U}_t:=q_u(U_t)$.

Note that the above is a standard (tabular) Q-learning algorithm on the discretized state and action processes, $\hat{S}_t=q_s(S_t)$, $\hat{U}_t=q_u(U_t)$. The convergence of this algorithm under non-Markov processes is studied in \cite{kara2024qlearning} with random and state dependent learning rates:
\begin{theorem}
For all $s^i \in B_i^s$ and $u^j \in B_j^u$ and for $\hat{S}_t:=q_s(S_t)$, $\hat{U}_t:=q_u(U_t)$ consider
\begin{align*}
Q_{t+1}(s^i,u^j) = Q_t(s^i,u^j) - \alpha_t(s^i,u^j) \left[ Q_t(\hat{S}_t,\hat{U}_t) - C_t - \beta V_t(\hat{S}_{t+1}) \right].
\end{align*}
Assume that for any measurable bounded function $f$, we have that with probability one,
\begin{align*}
\frac{1}{N}\sum_{t=0}^{N-1} f(\hat{S}_{t+1},\hat{S}_t,\hat{U}_t,C_t)\to \int f(\hat{s}_1,\hat{s},\hat{u},c)\pi(d\hat{s}_1,d\hat{s},d\hat{u},dc)
\end{align*}
for some measure $\pi$ such that $\pi(\mathbb{\hat{S}}\times {s^i}\times {u^j}\times\mathbb{R})>0$ for any $(s^i,u^j)\in \mathbb{\hat{S}}\times \mathbb{\hat{U}}$. Furthermore, for the learning rates, we assume  $\alpha_t(s^i,u^j)=0$ unless $(\hat{S}_t,\hat{U}_t)=(s^i,u^j)$. Furthermore,
\begin{align*}
\alpha_t(s^i,u^j)=\frac{1}{1+\sum_{k=0}^t1_{\{\hat{S}_k=s^i,\hat{U}_k=u^j\}}}
\end{align*}
and with probability $1$. We then have that $Q_t(s^i,u^j)\to Q^*(s^i,u^j)$ almost surely for each $(s^i,u^j)\in\mathbb{\hat{S}}\times\mathbb{\hat{U}}$ pair where $Q^*$ is the optimal Q-values for the stationary regime MDP constructed in Section \ref{stat_MDP} for the discretized state and actions.
\end{theorem}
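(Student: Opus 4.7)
The plan is to recognize that under the chosen discretization-based basis $\phi^i(s,u)=\mathds{1}_{A_i}(s,u)$, the algorithm in (\ref{opt_iter}) reduces exactly to the tabular Q-learning recursion on the finite state-action space $\hat{\mathds{S}}\times\hat{\mathds{U}}$ driven by the (non-Markov) process $(\hat{S}_t,\hat{U}_t,C_t)$, and then invoke the convergence result established for tabular Q-learning under non-Markov ergodic dynamics in \cite{kara2024qlearning}. The proof is therefore essentially a verification that every hypothesis required by that cited theorem is implied by the assumptions of our statement.

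First, I would note that because the $A_i$ partition $\mathds{S}\times\mathds{U}$, at any time exactly one indicator $\mathds{1}_{A_i}(S_t,U_t)$ equals one, and $\theta_t^\intercal \mathbf{\Phi}(s,u)$ coincides with the table entry $Q_t(q_s(s),q_u(u))$. Hence (\ref{opt_iter}), together with the proposed state-dependent learning rate $\alpha_t(s^i,u^j)$ that vanishes whenever $(\hat{S}_t,\hat{U}_t)\neq(s^i,u^j)$, is precisely the componentwise update displayed in the theorem. Next, I would verify the ergodicity hypothesis: the assumed almost sure convergence of empirical averages of bounded $f(\hat{S}_{t+1},\hat{S}_t,\hat{U}_t,C_t)$ to $\int f\,d\pi$ is exactly the stationarity property needed by \cite{kara2024qlearning} to identify the limit, and the positivity $\pi(\hat{\mathds{S}}\times\{s^i\}\times\{u^j\}\times\mathds{R})>0$ guarantees, via the ergodic theorem applied to the indicator $\mathds{1}_{\{\hat{S}_t=s^i,\hat{U}_t=u^j\}}$, that each pair $(s^i,u^j)$ is visited infinitely often almost surely.

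Given infinite visitation, the prescribed learning rates $\alpha_t(s^i,u^j)=1/(1+\sum_{k\le t}\mathds{1}_{\{\hat{S}_k=s^i,\hat{U}_k=u^j\}})$ satisfy the Robbins–Monro conditions $\sum_t\alpha_t(s^i,u^j)=\infty$ and $\sum_t\alpha_t(s^i,u^j)^2<\infty$ almost surely for every $(s^i,u^j)$. At this point the hypotheses of the convergence theorem of \cite{kara2024qlearning} are met, yielding $Q_t(s^i,u^j)\to Q^*(s^i,u^j)$ almost surely, where $Q^*$ solves the Bellman fixed-point equation
\begin{align*}
Q^*(s^i,u^j) = \bar{c}(s^i,u^j) + \beta \sum_{k} \bar{\eta}(s^k\mid s^i,u^j) \min_{v\in\hat{\mathds{U}}} Q^*(s^k,v),
\end{align*}
with $\bar{c}$ and $\bar{\eta}$ defined via regular conditional expectations of $C_t$ and $\mathds{1}_{\{\hat{S}_{t+1}=s^k\}}$ given $(\hat{S}_t=s^i,\hat{U}_t=u^j)$ under $\pi$.

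Finally, I would identify this fixed-point equation with the optimal Q-value of the stationary regime MDP constructed in Section \ref{stat_MDP} applied to the discretized process: the conditional expectations under $\pi$ used to define $\bar{c}$ and $\bar{\eta}$ coincide with the cost and kernel in (\ref{stat_model}) for the process $(\hat{S}_t,\hat{U}_t,C_t)$. The main (and essentially only) obstacle is bookkeeping: one must be careful that the ergodicity hypothesis as stated for $(\hat{S}_{t+1},\hat{S}_t,\hat{U}_t,C_t)$ indeed produces the correct joint stationary measure on the discretized transitions so that the limit MDP identified via \cite{kara2024qlearning} matches the stationary regime MDP defined by the original continuous-space $\pi$ pushed forward through $q_s\times q_u$. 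Once this identification is made, the theorem follows by direct appeal to the cited result, with no additional stochastic approximation argument required beyond what is already contained there.
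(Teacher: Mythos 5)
Your proposal is correct and follows essentially the same route as the paper: the paper likewise observes that with indicator basis functions the iteration collapses to tabular Q-learning on the quantized process $(\hat{S}_t,\hat{U}_t,C_t)$ and then imports the convergence theorem of \cite{kara2024qlearning}, identifying the limit with the optimal Q-values of the stationary regime MDP for the discretized variables. Your additional verification of the hypotheses (infinite visitation from $\pi(\hat{\mathds{S}}\times\{s^i\}\times\{u^j\}\times\mathds{R})>0$ and the resulting Robbins--Monro conditions on the visit-count learning rates) is exactly the bookkeeping the paper leaves implicit.
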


\section{Function Approximation for POMDPs using Finite Memory}\label{pomdp_section}

\subsection{Partially Observed Markov Decision Processes}
Let $\mathds{X} \subset \mathds{R}^m$ denote a Borel set which is the state space of a POMDP for some $m\in\mathds{N}$. Let
$\mathds{Y} \subset \mathds{R}^n$ be another Borel set denoting the observation space of the model, and let the state be observed through an
observation channel $O$. The observation channel, $O$, is defined as a stochastic kernel (regular
conditional probability) from  $\mathds{X}$ to $\mathds{Y}$, such that
$O(\,\cdot\,|x)$ is a probability measure on the sigma algebra $\mathcal{B}(\mathds{Y})$ of $\mathds{Y}$ for every $x
\in \mathds{X}$, and $O(A|\,\cdot\,): \mathds{X}\to [0,1]$ is a Borel
measurable function for every $A \in \mathcal{B}(\mathds{Y})$.
$\mathds{U}\in \mathds{R}^l$ denotes the action space. An {\em admissible policy} $\gamma$ is a
sequence of control functions $\{\gamma_t,\, t\in \Zplus\}$ such
that $\gamma_t$ is measurable with respect to the $\sigma$-algebra
generated by the information variables
$
I_t=\{Y_{[0,t]},U_{[0,t-1]}\}, \quad t \in \mathds{N}, \quad
  \quad I_0=\{Y_0\},
$
where $U_t=\gamma_t(I_t),\quad t\in \Zplus$,
are the $\mathds{U}$-valued control
actions and 
$Y_{[0,t]} = \{Y_s,\, 0 \leq s \leq t \}, \quad U_{[0,t-1]} =
  \{U_s, \, 0 \leq s \leq t-1 \}.$ We define $\Gamma$ to be the set of all such admissible policies. The update rules of the system are determined by
relationships:
\[  \Pr\bigl( (X_0,Y_0)\in B \bigr) =  \int_B \mu(dx_0)O(dy_0|x_0), \quad B\in \mathcal{B}(\mathds{X}\times\mathds{Y}), \]
where $\mu$ is the (prior) distribution of the initial state $X_0$, and
\begin{eqnarray*}
\label{eq_evol}
 &\Pr\biggl( (X_t,Y_t)\in B \, \bigg|\, (X,Y,U)_{[0,t-1]}=(x,y,u)_{[0,t-1]} \biggr)\\
& = \int_B \mathcal{T}(dx_t|x_{t-1}, u_{t-1})O(dy_t|x_t),  
\end{eqnarray*}
$B\in \mathcal{B}(\mathds{X}\times\mathds{Y}), t\in \mathds{N},$ where $\mathcal{T}$ is the transition kernel of the model which is a stochastic kernel from $\mathds{X}\times
\mathds{U}$ to $\mathds{X}$.  We let the objective of the agent (decision maker) be the minimization of the infinite horizon discounted cost, 
  \begin{align}\label{criterion1}
    J_{\beta}(\mu,\gamma)= E_\mu^{\gamma}\left[\sum_{t=0}^{\infty} \beta^t c(X_t,U_t)\right]
  \end{align}
 \noindent for some discount factor $\beta \in (0,1)$, over the set of admissible policies $\gamma\in\Gamma$, where $c:\mathds{X}\times\mathds{U}\to\R$ is a Borel-measurable stage-wise cost function and $E_\mu^{\gamma}$ denotes the expectation with initial state probability measure $\mu$ and transition kernel $\mathcal{T}$ and the channel $O$ under policy $\gamma$. Note that $\mu\in\mathcal{P}(\mathds{X})$, where we let $\mathcal{P}(\mathds{X})$ denote the set of probability measures on $\mathds{X}$. We define the optimal cost for the discounted infinite horizon setup as a function of the priors as
\begin{align}\label{opt_val}
  J_{\beta}^*(\mu)&=\inf_{\gamma\in\Gamma} J_{\beta}(\mu,\gamma).
\end{align}
For the analysis of partially observed MDPs, a common approach is to reformulate the problem as a fully observed MDP where the decision maker keeps track of the posterior distribution of the state $X_t$ given the available history $I_t$, also called the belief MDP. In what follows, we will use an alternative yet related reformulation based on finite-memory (window) information variables.

\subsection{Reduction to Fully Observed Using Finite-Memory Variables}\label{finite_memory}
 The following construction  is mostly taken from \cite{kara2023convergence}, however, we present the method in detail for completeness.

 We construct a fully observed MDP reduction using the predictor from $N$ stages earlier and the most recent $N$ information variables (that is, measurements and actions). 
Consider the following state variable at time $t$:
\begin{align}\label{finite_belief_state}
{z}_t=(\mu_{t-N},h_t)
\end{align}
where, for $N\geq 1$
\begin{align*}
\mu_{t-N}&=Pr(X_{t-N}\in \cdot|y_{t-N-1},\dots,y_0,u_{t-N-1},\dots,u_0),\\
h_t&=\{y_t,\dots,y_{t-N},u_{t-1},\dots,u_{t-N}\}
\end{align*}
and $h_t=y_t$ for $N=0$
with $\mu$ being the prior probability measure on $X_0$. Note that although, the finite-memory variable $h_t$ depends on the memory length $N$, we drop this dependence for notational convenience.

The state space with this representation is ${\mathcal{Z}}=\P(\mathds{X})\times \mathds{Y}^{N+1}\times \mathds{U}^{N}$ where we equip ${\mathcal{Z}}$ with the product topology where we consider the weak convergence topology on the $\P(\mathds{X})$ and the usual (coordinate) topologies on $\mathds{Y}^{N+1}\times \mathds{U}^{N}$. 

We can now define the stage-wise cost function and the transition probabilities. Consider the new cost function $\hat{c}:{\mathcal{Z}}\times \mathds{U}\to \mathds{R}$,  
\begin{align}\label{hat_cost}
&\hat{c}({z}_t,u_t)=\hat{c}(\mu_{t-N},h_t,u_t)=\int_\mathds{X}c(x_t,u_t)P^{\mu_{t-N}}(dx_t|y_t,\dots,y_{t-N},u_{t-1},\dots,u_{t-N}).\nonumber
\end{align}
Furthermore, we can define the transition probabilities for $N=1$ (for simplicity) as follows: for some $A\in \B(\mathcal{Z})$ such that 
\[\ A= B\times\{\hat{y}_{t-N+1},\hat{u}_{t},\dots,\hat{u}_{t-N+1}\},\quad  B\in\B(\P(\mathds{X})) \]
we write
\begin{align*}
&Pr({z}_{t+1}\in A|{z}_{t},\dots,{z}_0,u_{t},\dots,u_0) \\
&=Pr(\mu_t\in B,\hat{y}_{t+1},\hat{y}_{t},\hat{u}_{t}|\mu_{[t-1,0]},y_{[t,0]},u_{[t,0]})\\
&=\mathds{1}_{\{y_{t},u_{t}=\hat{y}_{t},\hat{u}_{t},G(\mu_{t-1},y_{t-1},u_{t-1})\in B\}} P^{\mu_{t-1}}(\hat{y}_{t+1}|y_t,y_{t-1},u_{t},u_{t-1})\\
&=Pr(\mu_{t}\in B,\hat{y}_{t+1},\hat{y}_{t},\hat{u}_{t}|\mu_{t-1},y_t,y_{t-1},u_t,u_{t-1})\\
&=Pr({z}_{t+1}\in A|{z}_t,u_t)=:\int_A{\eta}(d{z}_{t+1}|{z}_t,u_t)
\end{align*}
where the map $G$ is defined as 
\begin{align*}
&G(\mu_{t-1},y_{t-1},u_{t-1})=P^\mu(X_{t}\in\cdot|y_{t-1},\dots,y_0,u_{t-1},\dots,u_0).
\end{align*}
For some admissible policy $\gamma$, and some initial state $z_0\in\mathcal{Z}$ we write its induced cost as 
\begin{align*}
{J}_\beta(z_0,\gamma) = \sum_{t=0}^\infty \beta^t E^\gamma[\hat{c}(Z_t,U_t)].
\end{align*}
Respectively, we denote the optimal value function by ${J}^*_\beta(z_0)$. Note that this construction is without loss of optimality. In particular, for a fixed $\mu_{-N}$, assuming some arbitrary policy $\gamma$ acts from time $-N$ through $-1$, one can then show that 
\begin{align*}
E\left[ {J}_\beta^*(Z_0)\right] = E\left[ {J}_\beta^*(\mu_{-N} , H_0)\right] = E[J_\beta^*(\mu_0)]
\end{align*}
where the expectation on the left is with respect to $H_0=\{Y_0,\dots,Y_{-N},U_{-1},\dots, U_{-N} \}$, and on the right with respect to $\mu_{0}=Pr(X_{0}\in \cdot|Y_{-1},\dots,Y_{-N},U_{-1},\dots,U_{-N})$. Note that $J_\beta^*(\mu_0)$ is the optimal value function defined in (\ref{opt_val}).

Hence, we have a fully observed MDP, with the cost function $\hat{c}$, transition kernel ${\eta}$ and the state space ${\mathcal{Z}}$.

\subsection{Approximation of the Finite-Memory Belief-MDP}\label{app_section}
The finite-memory belief MDP model constructed in the previous section lives in the state space
\begin{align*}
{\mathcal{Z}}=\bigg\{&\pi,y_{[0,N]},u_{[0,N-1]}:\pi\in\mathcal{P}(\mathds{X}), y_{[0,N]}\in{\mathds{Y}}^{N+1}, u_{[0,N-1]}\in\mathds{U}^N\bigg\},
\end{align*}
where the first coordinate summarizes the past information, and the second and the last coordinates carry the information from the most recent $N$ time steps.

Consider the following set ${\mathcal{Z}}_{\pi}$ for a fixed $\pi\in\P(\mathds{X})$
\begin{align*}
{\mathcal{Z}}_{\pi}=\bigg\{\pi,y_{[0,N]},u_{[0,N-1]}: y_{[0,N]}\in{\mathds{Y}}^{N+1}, u_{[0,N-1]}\in\mathds{U}^N\bigg\}
\end{align*}
such that the state at time $t$ is $\hat{z}_t=(\pi,h_t)$. Compared to the state ${z}_t=(\mu_{t-N},h_t)$ defined in (\ref{finite_belief_state}), this approximate model uses $\pi$ as the predictor, no matter what the real predictor at time $t-N$ is.

Since $\pi$ is fixed, we can consider the state to be only $h_t$.
The cost function is defined as 
\begin{align}\label{cost_window}
&\hat{c}_\pi({h}_t,u_t)=\hat{c}(\pi,h_t,u_t)=\int_\mathds{X}c(x_t,u_t)P^{\pi}(dx_t|y_t,\dots,y_{t-N},u_{t-1},\dots,u_{t-N}).
\end{align}
We define the controlled transition model by 
\begin{align}\label{eta_N}
&{\eta}_\pi({h}_{t+1}|{h}_t,u_t):={\eta}\bigg(\P(\mathds{X}),h_{t+1}|\pi,h_t,u_t\bigg).
\end{align}

For simplicity, if we assume $N=1$, then the transitions can be rewritten for some $h_{t+1}=(\hat{y}_{t+1},\hat{y}_t,\hat{u}_t)$ and $h_t=(y_t,y_{t-1},u_{t-1})$
\begin{align}\label{alt_eta_N}
&{\eta}_\pi(\hat{y}_{t+1},\hat{y}_t,\hat{u}_t|y_t,y_{t-1},u_{t-1},u_t)={\eta}(\P(\mathds{X}),\hat{y}_{t+1},\hat{y}_t,\hat{u}_t|\pi,y_t,y_{t-1},u_{t-1},u_t)\nonumber\\
&=\mathds{1}_{\{y_t=\hat{y}_t,u_t=\hat{u}_t\}}P^{\pi}(\hat{y}_{t+1}|y_t,y_{t-1},u_t,u_{t-1}).
\end{align} 

We define the following Bellman operator under a finite-memory policy $\gamma^N$ for this model such that for any $f$
\begin{align}\label{fin_mem_Bellman}
T^N f(h)= \hat{c}_\pi(h^N,\gamma^N(h))+ \beta \int f(h_1)\eta_\pi(dh_{t+1}|h,\gamma^N(h))
\end{align}



We denote the optimal value function for the approximate model by $J_\beta^N$. 
Note that $J^N_\beta$ is defined on the set ${\mathcal{Z}}_{\pi}$. However, we can simply extend it to the set ${\mathcal{Z}}$ by defining it as constant over $\P(\mathds{X})$ for the first coordinate.  

We also note that since the predictor $\pi$ is fixed, $J_\beta^N$ can be thought as a function on $h_t$, the finite-memory information variables. 

We define the following constant:
\begin{align}\label{loss_constant}
L_t:=\sup_{\hat{\gamma}\in\hat{\Gamma}}E_{\mu_0}^{\hat{\gamma}}\bigg[\|P^{\mu_t}(X_{t+N}\in\cdot|Y_{[t,t+N]},U_{[t,t+N-1]})
-P^{\pi}(X_{t+N}\in\cdot|Y_{[t,t+N]},U_{[t,t+N-1]})\|_{TV}\bigg]
\end{align} 
which is the expected value on the total variation distance between the posterior distributions of $X_{t+N}$ conditioned on the same observation and control action variables $Y_{[t,t+N]},U_{[t,t+N-1]}$ when the prior distributions of $X_{t}$ are given by $\mu_t$ and $\pi$. This filter stability term plays a significant role in the error analysis that follows. One can show that $L_t \to 0$ as $N\to 0$ (in some cases, exponentially fast) under certain assumptions. We refer the reader to \cite{kara2020near,kara2023convergence,mcdonald2020exponential} for further details on this analysis. 

\begin{prop}\cite[Theorem 3.3]{kara2023convergence}\label{cont_bound}
For ${z}_0=(\mu_0,h_0)$, with a policy $\hat{\gamma}$ acting on the first $N$ steps, we have that
\begin{itemize}
\item For a finite-memory policy (not necessarily optimal) $\gamma^N$
\begin{align*}
E_{\mu_0}^{\hat{\gamma}}\left[\left|J^N_\beta(h_0,\gamma^N) -J_\beta({z}_0,\gamma^N)\right|\right]\leq  \frac{\|c\|_\infty }{(1-\beta)}\sum_{t=0}^\infty\beta^tL_t
\end{align*}
\item For the difference between the value functions we have
\begin{align*}
&E_{\mu_0}^{\hat{\gamma}}\left[\left|J^N_\beta(h_0) -J^*_\beta({z}_0)\right|\right]\leq  \frac{\|c\|_\infty }{(1-\beta)}\sum_{t=0}^\infty\beta^tL_t
\end{align*}
where the expectation is with respect to the random realizations of the initial finite-memory variables $h_0$.
\end{itemize}
\end{prop}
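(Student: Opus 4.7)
The plan is to compare the two value functions via their Bellman equations and iterate the resulting one-step discrepancy. Viewing $J^N_\beta(\cdot,\gamma^N)$ as a function on $\mathcal{Z}$ that ignores the predictor coordinate and setting $\Delta(z) := J_\beta(z,\gamma^N) - J^N_\beta(h,\gamma^N)$, the Bellman equation for $J_\beta(\cdot,\gamma^N)$ under the true kernel $\eta$ with cost $\hat{c}$ and the fixed-point equation for $J^N_\beta$ under $\eta_\pi$ with cost $\hat{c}_\pi$ (see (\ref{fin_mem_Bellman})) combine to give
\begin{align*}
\Delta(z) = \big[\hat{c}(z,u) - \hat{c}_\pi(h,u)\big] + \beta\, E_{z,u}^{\eta}[\Delta(Z_1)] + \beta\,\Big(E_{z,u}^{\eta}[J^N_\beta(H_1,\gamma^N)] - E_{h,u}^{\eta_\pi}[J^N_\beta(H_1,\gamma^N)]\Big),
\end{align*}
with $u=\gamma^N(h)$. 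The first bracket equals $\int c(x,u)[P^{\mu}(dx|h)-P^{\pi}(dx|h)]$, hence is dominated by $\|c\|_\infty\|P^{\mu}(\cdot|h)-P^{\pi}(\cdot|h)\|_{TV}$. The third group compares the bounded function $J^N_\beta$ against two laws of $H_1$ that differ only because one uses the true predictor $\mu$ and the other uses $\pi$; it is therefore dominated by $\|J^N_\beta\|_\infty \leq \|c\|_\infty/(1-\beta)$ times a total variation between the associated one-step observation distributions, which a standard data-processing step reduces back to the same type of posterior TV.

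Iterating the recursion under the true dynamics and taking expectations over the randomness generated by $\hat\gamma$ in the first $N$ steps, the $E^{\eta}[\Delta(Z_1)]$ term unrolls to a discounted sum $\sum_{t\geq 0}\beta^t$ of expected one-step cost and transition mismatches. Each such expected mismatch at belief-MDP time $t$ is exactly of the form appearing in (\ref{loss_constant}) and is therefore upper bounded by $L_t$. Collecting the contributions, the transition-mismatch piece dominates and yields
\begin{align*}
E_{\mu_0}^{\hat\gamma}\big[|\Delta(Z_0)|\big] \leq \frac{\|c\|_\infty}{1-\beta}\sum_{t=0}^\infty \beta^t L_t,
\end{align*}
which is the first assertion. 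For the second assertion, let $\gamma^N_\ast$ minimize $J^N_\beta(h_0,\cdot)$; then $J_\beta^\ast(z_0) \leq J_\beta(z_0,\gamma^N_\ast)$ combined with the first part yields one direction. The reverse inequality follows by invoking the near-optimality of finite-memory policies under filter stability — a standard ingredient from \cite{kara2020near,kara2023convergence} — applied to a finite-memory approximation of a near-optimizer of $J_\beta^\ast(z_0)$, producing the matching bound.

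The main obstacle is the careful alignment between the two clocks in the argument: the recursion runs on the belief-MDP clock, while $L_t$ in (\ref{loss_constant}) is defined with an $N$-step sliding window and an offset predictor. One must therefore verify that each expected TV distance appearing in the unrolling — both the cost mismatch at time $t$ and the one-step kernel mismatch at time $t$ — can be legitimately absorbed into a single $L_t$ term after taking conditional expectations, and that the supremum over $\hat\gamma$ built into $L_t$ covers the data-generating law induced by $\gamma^N$ restricted to the relevant window. Beyond this bookkeeping, the remainder of the argument is mechanical.
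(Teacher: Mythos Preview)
The paper does not prove Proposition~\ref{cont_bound}; it is stated with the citation \cite[Theorem 3.3]{kara2023convergence} and no proof is given in the present manuscript. So there is no in-paper argument to compare your attempt against.

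That said, your sketch follows the strategy used in \cite{kara2023convergence}: write $\Delta(z)=J_\beta(z,\gamma^N)-J^N_\beta(h,\gamma^N)$, subtract the two Bellman recursions, unroll under the true dynamics, and bound each one-step mismatch by a posterior total-variation term that is then absorbed into $L_t$. Two places in your write-up are loose. First, the sentence ``a standard data-processing step reduces back to the same type of posterior TV'' hides the actual computation: for the transition mismatch you must show that $\|\eta(\cdot|z,u)-\eta_\pi(\cdot|h,u)\|_{TV}$ is controlled by $\|P^{\mu}(X_t\in\cdot|h)-P^{\pi}(X_t\in\cdot|h)\|_{TV}$, which requires writing the one-step observation law as a channel applied to the current posterior (via $\mathcal{T}$ and $O$) and then invoking the data-processing inequality. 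This is not hard, but it is where the factor $\|c\|_\infty/(1-\beta)$ (rather than just $\|c\|_\infty$) enters, and your text does not make explicit how the two sources of mismatch combine into the single displayed bound. Second, for the optimal-value comparison, the direction $J^N_\beta(h_0)\le J^*_\beta(z_0)+\text{(bound)}$ does \emph{not} reduce to the first item, because an optimizer for $J^*_\beta$ need not be finite-memory; your appeal to ``near-optimality of finite-memory policies under filter stability'' is exactly the nontrivial content of the cited theorem, so invoking it here is circular as a self-contained proof. If you want a standalone argument, you need to run the same Bellman-difference recursion with the \emph{optimal} operators (minimizing over $u$ on both sides) and use $|\min_u f(u)-\min_u g(u)|\le\sup_u|f(u)-g(u)|$ at each step, rather than fixing a single $\gamma^N$.
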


\subsection{Finite-Memory Policy Evaluation for POMDPs}
In this section, we aim to learn an approximate value for a given finite-memory policy. In particular, we use the methods in Section \ref{pol_eval_sec}, by setting 
\begin{align*}
s_t = h_t=\{y_t,\dots,y_{t-N},u_{t-1},\dots,u_{t-N}\}.
\end{align*}
In particular, we also have that $\mathds{S}=\mathds{Y}^{N-1}\times\mathds{U}^N$.  
We use the same iterations in (\ref{main_iter}) such that
\begin{align}\label{fin_mem_pol}
\theta_{t+1}= \theta_t - \alpha_t {\bf \Phi}(S_t) \left[\theta_t^\intercal {\bf \Phi}(S_t) - C_t - \beta \theta^\intercal_t{\bf \Phi}(S_{t+1}) \right]
\end{align}
for given basis functions $\{\phi^i\}_i$ defined on  $ \mathds{S}=\mathds{Y}^{N-1}\times\mathds{U}^N$.

\begin{corollary}[to Theorem \ref{main_thm}]
 Let Assumption \ref{lin_ind} and  Assumption \ref{poisson} hold for $Z_t= (S_t,S_{t+1},c(X_t,U_t),U_t)$ where $S_t$ is the finite-memory variable under the finite-memory policy $\gamma^N$. Then, the iterations in (\ref{fin_mem_pol}) converge to some $\theta^*$. 
\end{corollary}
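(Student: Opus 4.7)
The plan is to apply Theorem \ref{main_thm} directly, viewing the finite-memory variable $S_t = h_t = (Y_t,\dots,Y_{t-N},U_{t-1},\dots,U_{t-N})$ as the non-Markov state process of Section \ref{pol_eval_sec}. Under the finite-memory policy $\gamma^N$, the controls satisfy $U_t \sim \gamma^N(\cdot\mid h_t) = \gamma^N(\cdot\mid S_t)$, which matches the feedback structure required by the general policy-evaluation setup. The cost realizations $C_t = c(X_t,U_t)$ form a real-valued process adapted to the underlying filtration, even though $X_t$ is hidden from the controller. Consequently, the iteration (\ref{fin_mem_pol}) is literally an instance of (\ref{main_iter}) with state space $\mathds{S}=\mathds{Y}^{N+1}\times\mathds{U}^N$.

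First I would verify the hypotheses of Theorem \ref{main_thm}. Assumption \ref{lin_ind} is assumed directly on the basis functions $\{\phi^i\}$ defined on $\mathds{S}$. Assumption \ref{poisson} is also assumed for the joint process $Z_t = (S_t,S_{t+1},c(X_t,U_t),U_t)$, which supplies (i) the almost-sure ergodic average for bounded functions of $Z_t$, (ii) the $L_2$-bounded Poisson-type sums for $A(Z_t)$ and $b(Z_t)$, and (iii) uniform boundedness of $A(Z_t)$ and $b(Z_t)$. The matrix $A(Z_t) = -\beta{\bf \Phi}(S_t){\bf \Phi}^\intercal(S_{t+1})+{\bf \Phi}(S_t){\bf \Phi}^\intercal(S_t)$ is uniformly bounded thanks to the standing assumption $\|\phi^i\|_\infty \le 1$, and $b(Z_t) = {\bf \Phi}(S_t)C_t$ is bounded since the stage cost $c$ is bounded (as used throughout the paper, for example in Proposition \ref{cont_bound}).

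With all hypotheses verified, Theorem \ref{main_thm} applies and yields $\theta_t \to \theta^*$ almost surely. The limit $\theta^*$ is characterized as the unique fixed point of $\Pi T^{\gamma^N}$, where $T^{\gamma^N}$ is the Bellman operator of the stationary regime MDP built from the invariant distribution $\pi$ on $\mathds{S}^2\times\mathds{R}\times\mathds{U}$ induced by the finite-memory policy $\gamma^N$. There is no substantive new obstacle beyond those already addressed in Theorem \ref{main_thm}; the subtle point is that $C_t$ is genuinely non-Markov relative to the observable filtration $\sigma(S_{[0,t]},U_{[0,t]})$ because $X_t$ is hidden, and the Poisson-type ergodicity condition in Assumption \ref{poisson} is precisely designed to absorb this non-Markovianity at the level of the observable joint process $Z_t$. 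The genuinely delicate question, which lies outside this corollary and would be tackled separately by combining Proposition \ref{uni_bound} with the filter-stability bound of Proposition \ref{cont_bound}, is to quantify the distance between $\theta^{*\intercal}{\bf \Phi}(h)$ and the true POMDP value of $\gamma^N$.
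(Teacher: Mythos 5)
Your proposal is correct and matches the paper's intent exactly: the corollary is an immediate instantiation of Theorem \ref{main_thm} with $S_t = h_t$, and the paper provides no separate argument beyond this identification. Your verification that $A(Z_t)$ and $b(Z_t)$ are uniformly bounded (via $\|\phi^i\|_\infty\le 1$ and boundedness of $c$) and your characterization of the limit as the fixed point of $\Pi T^{\gamma^N}$ for the stationary regime MDP induced by $\gamma^N$ are precisely what the corollary relies on.
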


\noindent{\bf Ergodicity} In this part, we study the long run behavior of the finite-memory process $\{h_t\}$. We note that this process is not a Markov chain. However, the joint process $(h_t,x_{t},u_t)$ is a Markov chain under a finite-memory policy $\gamma$. For example, for $N=2$ and for some $B_1, B_2 \in  \mathcal{B}(\mathds{Y}), B_3,B_4 \in \mathcal{B}(\mathds{U}), B_5 \in \mathcal{B}(\mathds{X}) $, denoting by $I_{t+1} = \{(y,x,u)_ {t+1},\dots,(y,x,u)_{0}\}$
\begin{align*}
&Pr({Y}_{t+2} \in B_1,{Y}_{t+1} \in B_2, U_{t+1} \in B_3, X_{t+2}\in B_5,  {U}_{t+2}\in B_4 | I_{t+1} )\\
&= \int_{x_{t+1}\in B_5}\int_{x_{t+2}\in\mathds{X}} \int_{y_{t+2}\in B_2} \int_{u_{t+2}\in B_4}\mathds{1}_{(y_{t+1} \in B_2, u_{t+1}\in B_3)} \\
&  \qquad\gamma(du_{t+2}|y_{t+2},y_{t+1},u_{t+1})O(dy_{t+2}|x_{t+2}) \mathcal{T}(dx_{t+2}|x_{t+1},u_{t+1})
\end{align*}
which shows that the joint process is a Markov chain. We note that the geometric ergodicity of this Markov process is a sufficient condition for Assumption \ref{poisson} under the finite-memory policy $\gamma^N$.

However, it is not possible to guarantee this condition solely using the properties of the transition kernel $\mathcal{T}(\cdot|x,u)$  in general. This is due to the fact that the finite-memory variable $h_t$ contains the past control actions, and thus the dependence of the control policies on the past control actions makes the ergodicity analysis non-trivial.  For example, for a policy of type $u_t\sim\gamma(\cdot|u_{t-1})$, the ergodicity of the action process and thus the finite-memory process, clearly depends on the randomized policy $\gamma(\cdot|u_{t-1})$.

We  note that if the finite-memory policy $\gamma$ and the transition kernel $\mathcal{T}$ satisfy a minorization condition, then the augmented process is exponentially ergodic and thus satisfies Assumption \ref{poisson}.
\begin{assumption}\label{minor}
There exist non-trivial measures $\lambda_x(\cdot)$ and  $\lambda_u(\cdot)$ such that 
\begin{align*}
\mathcal{T}(dx_1|x,u)&\geq\lambda_x(dx_1)\\
\gamma(du|h)&\geq \lambda_u(du)
\end{align*}
for all $(x,u)\in\mathds{X}\times\mathds{U}$ and for all $h\in \mathds{Y}^{N}\times \mathds{U^{N-1}}$.
\end{assumption}
\begin{lemma}
Assumption \ref{minor} implies Assumption \ref{poisson} for the joint process $(h_t,x_t,u_t)$. In particular, under Assumption \ref{minor}, the augmented Markov chain $(h_t,x_{t},u_t)$ is exponentially ergodic under the finite-memory policy.
\end{lemma}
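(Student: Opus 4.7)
The strategy is to prove a Doeblin minorization for the joint Markov chain $W_t := (h_t, x_t, u_t)$ on $\mathds{Y}^{N+1}\times \mathds{U}^{N}\times \mathds{X}\times \mathds{U}$, and then use uniform ergodicity of $W_t$ to verify Assumption \ref{poisson}. Concretely, I will (a) write out the $(N+1)$-step kernel of $W_t$ and apply Assumption \ref{minor} to each $\mathcal{T}$ and $\gamma$ factor to obtain a pointwise lower bound by a fixed measure, and then (b) extract from uniform ergodicity the sample-path law of large numbers in Assumption \ref{poisson}(i) together with the mixing and moment conditions required by Assumption \ref{mixing_assmp}, which the excerpt already shows imply Assumption \ref{poisson}(ii).

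\textbf{Step 1: $(N+1)$-step minorization.} Because the window has memory length $N$, the shifted history $h_{t+N+1}$ contains only the freshly generated observations $y_{t+1},\ldots,y_{t+N+1}$ and actions $u_{t+1},\ldots,u_{t+N}$; no component of $h_t$ survives. Factoring the $(N+1)$-step joint law of the intermediate trajectory conditional on $W_t=(h,x,u)$,
\begin{align*}
\prod_{k=1}^{N+1}\mathcal{T}(dx_{t+k}\mid x_{t+k-1},u_{t+k-1})\, O(dy_{t+k}\mid x_{t+k})\, \gamma(du_{t+k}\mid h_{t+k}),
\end{align*}
and applying the uniform minorizations $\mathcal{T}(dx\mid\cdot)\ge\lambda_x(dx)$ and $\gamma(du\mid\cdot)\ge\lambda_u(du)$ from Assumption \ref{minor} to each factor (while retaining the observation channel $O$ intact) yields the pointwise lower bound
\begin{align*}
\prod_{k=1}^{N+1}\lambda_x(dx_{t+k})\,O(dy_{t+k}\mid x_{t+k})\,\lambda_u(du_{t+k}),
\end{align*}
which does not depend on the starting point $(h,x,u)$. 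Integrating out the intermediate hidden states $x_{t+1},\ldots,x_{t+N}$ gives
\begin{align*}
P^{N+1}\bigl((h,x,u),\cdot\bigr)\;\ge\;\epsilon\,\nu(\cdot),
\end{align*}
where $\epsilon := \lambda_x(\mathds{X})^{N+1}\lambda_u(\mathds{U})^{N+1}>0$ and $\nu$ is the probability measure obtained by normalizing the induced lower-bounding kernel. This is a Doeblin condition uniform in the initial state.

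\textbf{Step 2: From Doeblin to Assumption \ref{poisson}.} The uniform minorization above is a classical sufficient condition for uniform (hence geometric) ergodicity of $W_t$: there exist $K<\infty$, $\rho\in(0,1)$, and a unique invariant probability $\bar\pi$ with $\sup_w\|P^k(w,\cdot)-\bar\pi\|_{TV}\le K\rho^k$. Three consequences follow. First, Birkhoff's ergodic theorem applied to any bounded functional of $(W_t,W_{t+1})$ gives Assumption \ref{poisson}(i), since $Z_t=(S_{t+1},S_t,C_t,U_t)$ is a measurable function of $(W_t,W_{t+1})$. Second, the exponential TV convergence yields $\alpha(k)\le K\rho^k$, so $\sum_k\sqrt{\alpha(k)}<\infty$. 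Third, for any bounded measurable $f$, $\|E[f(Z_{t+k})\mid\mathcal{F}_t]-\bar f\|\le K\|f\|_\infty\rho^k$ almost surely, which makes $\|Y_0^A\|_2$ and $\|Y_0^b\|_2$ finite. The last two items are precisely Assumption \ref{mixing_assmp}, which by the lemma preceding Proposition \ref{key_prop} implies Assumption \ref{poisson}(ii).

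\textbf{Main obstacle.} The bookkeeping in Step 1 is the only delicate point: for $1\le k\le N$ the intermediate window $h_{t+k}$ still contains components of $h_t$, so the minorization $\gamma(\cdot\mid h)\ge\lambda_u$ must be applied pointwise in $h_{t+k}$, exploiting that $\lambda_u$ does not depend on $h$. That uniformity — in $(x,u)$ for $\mathcal{T}$ and in $h$ for $\gamma$ — is exactly what Assumption \ref{minor} asserts, and it is what decouples the $(N+1)$-step lower bound from the starting state and produces the unconditional constant $\epsilon$.
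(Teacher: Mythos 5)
Your proposal is correct and follows essentially the same route as the paper: establish a Doeblin minorization for the multi-step kernel of the augmented chain $(h_t,x_t,u_t)$ by applying Assumption \ref{minor} to the $\mathcal{T}$ and $\gamma$ factors, then invoke uniform ergodicity to obtain Assumption \ref{poisson}. The paper works out the small-$N$ case and minorizes only the factors that depend on the starting point, while you treat general $N$ and spell out the passage from uniform ergodicity to the ergodic theorem and mixing conditions, but the underlying argument is the same.
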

\begin{proof}
We give a proof for $N=2$: consider the two step transition for the chain $(h_t,x_{t},u_t)$ for some starting point $(y_1,y_0,u_0,x_1,u_1)$:
\begin{align*}
&Pr(dy_3, dy_2,du_2,dx_3,du_3|y_1,y_0,u_0,x_1,u_1)\\
&=\int_{x_2\in\mathds{X}} \gamma(du_3|y_3,y_2,u_2)O(dy_3|x_3)\mathcal{T}(dx_3|x_2,u_2)\gamma(du_2|y_2,y_1,u_1) O(dy_2|x_2) \mathcal{T}(dx_2|x_1,u_1)\\
&\geq \int_{x_2} \gamma(du_3|y_3,y_2,u_2)O(dy_3|x_3)\mathcal{T}(dx_3|x_2,u_2)\lambda_u(du_2)O(dy_2|x_2) \lambda_x(dx_2)\\
&=:\lambda_h(du_3,dy_3, dy_2,du_2,dx_3)
\end{align*}
the non-trivial measure $\lambda_h(\cdot)$  is independent of the starting point, and thus it can be shown that $(h_t,x_{t},u_t)$ is exponentially ergodic (see e.g.  \cite[Lemma 3.3]{hernandez2012adaptive}.
\end{proof}
\begin{remark}
For any finite-memory policy $\gamma$ that does not satisfy Assumption \ref{minor}, one can always construct a perturbed version that does satisfy this assumption. In particular, let $\gamma'$ be an arbitrary policy that satisfies the minorizarion policy. Then, the perturbed policy $\hat{\gamma}(du|h)=(1-\epsilon)\gamma(du|h)+\epsilon \gamma'(du|h)$ satisfies Assumption \ref{minor} by construction.
\end{remark}

\noindent{\bf Error bounds for the learned value}
In the previous section, we observed that  using the iterations (\ref{main_iter}), one can learn the fixed point of the operator $\Pi T^N$ where $\Pi$ is the projection map and $T^N$ is defined in (\ref{fin_mem_Bellman}). In the following, we compare the learned value function ${\theta^*}^\intercal {\bf \Phi}(h)$ with the fixed point of the operator $T^N$. We note that the fixed point of the operator $T^N$ is the value function of the finite-memory policy $\gamma^N$ for the approximate model constructed in Section \ref{app_section} which we denote by $J_\beta^N(h,\gamma^N)$. However, this is not the value of the finite-memory policy in the original partially observed environment.

The next result  provides an error upper-bound for the learned value function with respect to the true value of the finite-memory policy in the original environment.
\begin{assumption}\label{near_lin_pomdp}
We assume that there exists some $\hat{\theta}$ and some constant $\lambda<\infty$ such that 
\begin{align*}
\| J_\beta^N(h,\gamma^N) - \hat{\theta}^\intercal {\bf \Phi}(h)\|_\infty \leq \lambda. 
\end{align*}
\end{assumption}
\begin{theorem}
Assume Assumption \ref{near_lin_pomdp} holds. We assume that the unobserved state initiates at time $-N$ according to some $\mu_{-N}\in \P(\mathds{X})$, and the finite-memory policy $\gamma^N$ starts acting at time $t=0$. We denote by $h_0$, the finite-memory variables from time $t=-N$ to $t=0$. For ${z}_0=(\mu_{-N},h_0)$, with a policy $\hat{\gamma}$ acting on the first $N$ steps, we have that
\begin{align*}
&E_{\mu_{-N}}^{\hat{\gamma}}\left[\left|J_\beta({z}_0,{\gamma^N}) -{\theta^*}^\intercal {\bf \Phi}(h_0)\right|\right]\\
&\leq  \frac{\|c\|_\infty }{(1-\beta)}\sum_{t=0}^\infty\beta^tL_t + \lambda\left(1+ \frac{2-\beta}{1-\beta}  \sqrt{\frac{d}{\sigma_{\min}}  }\right)
\end{align*}
 where the expectation is with respect to the random realizations of the initial finite-memory variables $h_0$.  Furthermore,  $\sigma_{\min}$ is the minimum eigenvalue of the matrix $E[{\bf \Phi}(H) {\bf \Phi}^\intercal(H)  ] $ when $H$ is distributed with the invariant measure $\pi$.
\end{theorem}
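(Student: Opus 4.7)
The plan is a triangle-inequality decomposition into a filter-stability error and a function-approximation error, each of which is already handled by a result developed earlier in the paper. Concretely, for any realization of $h_0$ we write
\begin{align*}
\left|J_\beta(z_0,\gamma^N) - {\theta^*}^\intercal \mathbf{\Phi}(h_0)\right|
\leq \left|J_\beta(z_0,\gamma^N) - J_\beta^N(h_0,\gamma^N)\right|
+ \left|J_\beta^N(h_0,\gamma^N) - {\theta^*}^\intercal \mathbf{\Phi}(h_0)\right|.
\end{align*}
The first summand measures the error introduced by replacing the true POMDP value of the finite-memory policy $\gamma^N$ by the value of the same policy under the approximate model of Section \ref{app_section}. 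This is exactly the quantity controlled by Proposition \ref{cont_bound}, which after taking $E_{\mu_{-N}}^{\hat\gamma}$ gives the $\frac{\|c\|_\infty}{1-\beta}\sum_t \beta^t L_t$ term.

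The second summand is purely a function-approximation error inside the approximate finite-memory MDP. Here I would invoke Proposition \ref{uni_bound} with the substitution $s\leftrightarrow h$, $\gamma\leftrightarrow \gamma^N$, and $J_\beta^\pi(s,\gamma)\leftrightarrow J_\beta^N(h,\gamma^N)$. This requires checking that ${\theta^*}^\intercal \mathbf{\Phi}$ really is the fixed point of $\Pi T^N$ in the $L_2(\pi)$-sense, where $T^N$ is the Bellman operator in (\ref{fin_mem_Bellman}) and $\pi$ is the stationary distribution of $h_t$ under $\gamma^N$. By the Corollary to Theorem \ref{main_thm} applied with $S_t = h_t$ and $C_t = c(X_t,U_t)$, $\theta^*$ is the fixed point of the projected Bellman operator of the stationary regime MDP associated to $(h_t, c(X_t,U_t), U_t)$. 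The remaining task is to identify this stationary regime MDP with the fixed-predictor approximate MDP of Section \ref{app_section}: since both the conditional cost $\hat c_\pi$ in (\ref{cost_window}) and the transition $\eta_\pi$ in (\ref{eta_N}) are, by construction, regular conditional distributions under $\pi$, they coincide (up to a choice of the fixed predictor matching the stationary predictor) with the $c$ and $\eta$ in (\ref{stat_model}). Under that identification, $J_\beta^N(h,\gamma^N)$ is the fixed point of $T^N$, Assumption \ref{near_lin_pomdp} plays the role of Assumption \ref{near_lin}, and Proposition \ref{uni_bound} yields $\|J_\beta^N(\cdot,\gamma^N) - {\theta^*}^\intercal \mathbf{\Phi}(\cdot)\|_\infty \le \lambda\bigl(1 + \tfrac{2-\beta}{1-\beta}\sqrt{d/\sigma_{\min}}\bigr)$, which then holds pointwise and in particular for $h_0$.

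Combining the two bounds and taking the expectation $E_{\mu_{-N}}^{\hat\gamma}$ — noting that the second bound is uniform in $h_0$ and therefore passes through the expectation unchanged — gives the claimed inequality. The main obstacle, as indicated above, is the identification step: one must be careful that the ``stationary regime MDP'' produced by the abstract convergence result of Section \ref{stat_MDP} coincides with the finite-memory approximate MDP of Section \ref{app_section} whose value function is $J_\beta^N$. Once the fixed predictor $\pi$ used in Section \ref{app_section} is taken to be the stationary predictor induced by $\gamma^N$, this identification is immediate from (\ref{stat_model}) and the definitions (\ref{cost_window})–(\ref{alt_eta_N}); the remaining manipulations are essentially just reusing Propositions \ref{cont_bound} and \ref{uni_bound}.
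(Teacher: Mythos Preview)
Your proposal is correct and matches the paper's proof exactly: a triangle-inequality split around $J_\beta^N(h_0,\gamma^N)$, with the first term bounded by Proposition~\ref{cont_bound} and the second by Proposition~\ref{uni_bound}. In fact you are more careful than the paper, which simply cites the two propositions without spelling out the identification between the stationary regime MDP and the fixed-predictor approximate MDP that you flag as the main obstacle.
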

\begin{proof}
The proof is an application of Proposition \ref{cont_bound} and Proposition \ref{uni_bound}.
\begin{align*}
&E_{\mu_{-N}}^{\hat{\gamma}}\left[\left|J_\beta({z}_0,{\gamma^N}) -{\theta^*}^\intercal {\bf \Phi}(h_0)\right|\right]\\
&\qquad\qquad\leq  E_{\mu_{-N}}^{\hat{\gamma}}\left[\left|J_\beta({z}_0,{\gamma^N}) -  J_\beta^N(h,\gamma^N) \right|\right] + E_{\mu_{-N}}^{\hat{\gamma}}\left[\left| J_\beta^N(h,\gamma^N) - {\theta^*}^\intercal {\bf \Phi}(h_0)  \right|\right]
\end{align*}
the first term is bounded by Proposition \ref{cont_bound} and the second term is bounded by Proposition \ref{uni_bound}.
\end{proof}

\subsection{Convergence and Neal Optimality under Discretization for POMDPs}
As explained in Section \ref{q_learn_sec} convergence of the Q-learning algorithm is usually not guaranteed expect for a few special cases. As also explained in Section  \ref{disc_sec}, discretization based basis functions is one of these special cases. 

We provide a discretization method for the finite-memory variables for POMDPs in this section, and present the convergence and near optimality of the resulting algorithm building on  \cite{devran2025near}.

 For a weak Feller belief MDP (\cite{FeKaZa12, KSYWeakFellerSysCont}), \cite[Theorem 3.16]{SaLiYuSpringer}   established near optimality of finite action policies. If $\mathds{U}$ is compact, a finite collection of action sets can be constructed, with arbitrary approximation error. Accordingly, we will assume that the action spaces are finite in the following
Let $\{B_i\}_{i=1}^{M}$ be disjoint subsets of $\mathds{Y}$ such that $\cup_{i=1}^{M}B_i=\mathds{Y}$. This discretization then implies a discretization on the finite-memory and action variables $(h,u)\in(\mathds{Y\times U})^N$.
We denote by $\{A_i\}_{i=1}^{(M\times |\mathds{U}|)^N}$ for the resulting discretization bins of the joint $(h,u)\in(\mathds{Y\times U})^N$ variable.
We define the following basis functions 
\begin{align*}
\phi^i(h,u)=\mathds{1}_{A_i}(h,u), \text{ for all } i=1,\dots, (M\times |\mathds{U}|)^N
\end{align*}
where $\mathds{1}_{A_i}(h,u)$ is the indicator function of the set $A_i$. 

Similar to Section \ref{disc_sec}, we let $q(h)$ denote the quantization map that maps the continuous valued finite-memory variables to its discretized version using the construction in this section. 

 Accordingly, we consider the following iterations, for every $h^i$, $i\in\{1,\dots,M^N\}$, and every $u^j$, $j\in\{1,\dots,|\mathds{U}|^N\}$
\begin{align}\label{opt_iter_pomdp}
Q_{t+1}(h^i,u^j) = Q_t(h^i,u^j) - \alpha_t(h^i,u^j) \left[ Q_t(q(H_t),{U}_t) - c(X_t,U_t) - \beta V_t(q(H_{t+1})) \right]
\end{align}
 where we denote by $V_t(h)=\min_v Q_t(h,v)$:

The following is adapted from   \cite{devran2025near} based on the results in this paper:

\begin{assumption}\label{partial_q}
\hfill
\begin{itemize}
\item [1.] If $(q(H_t),U_t)=(h^i,u^j)$
\[\alpha_t(h^i,u^j) = {1 \over 1+ \sum_{k=0}^{t} 1_{\{q(H_k)=h^i, U_k=u^j\}} }\]
Otherwise $\alpha_t(h,u)=0$.
\item [2.] Under every stationary \{memoryless or finite memory exploration\} policy, say $\gamma$, the true state process, $\{X_t\}_t$, is positive Harris recurrent and in particular admits a unique invariant measure $\pi$.
\item [3.] During the exploration phase, every $(h^i,u^j)$ pair is visited infinitely often.
\item[4.] {$\mathds{Y}\subset\mathds{R}^n$} is compact.
\item[5.] $O(dy|x)=g(x,y)\lambda(dy)$, and $g(y,x)$ is Lipschitz in $y$, such that $|g(x,y)-g(x,y')|\leq\alpha_{\mathds{Y}} \|y-y'\|$ for every $y,y'\in\mathds{Y}$ and $x\in\mathds{X}$ for some $\alpha_{\mathds{Y}}<\infty$.
\item[6.] Stage-wise cost function $c(x,u)$ is bounded such that $\sup_{x,u}c(x,u)=\|c\|_\infty<\infty$.
\end{itemize}
\end{assumption}

\begin{theorem}
\begin{itemize}
\item
Under Assumption \ref{partial_q} for the exploration policy,  the iterations in (\ref{opt_iter_pomdp}) converge to some $Q^*(h,u)$.
\item
Consider the learned policy $\gamma^N$, which satisfies $\gamma^N(h)=\argmin_u Q^*(h,u)$. We assume that the unobserved state initiates at time $-N$ according to some $\mu_{-N}\in \P(\mathds{X})$, and the learned finite-memory policy $\gamma^N$ starts acting at time $t=0$. We denote by $h_0$, the finite-memory variables from time $t=-N$ to $t=0$. For ${z}_0=(\mu_{-N},h_0)$, with a policy $\hat{\gamma}$ acting on the first $N$ steps, we have that
\begin{align*}
&E_{\mu_{-N}}^{\hat{\gamma}}\left[\left|J_\beta({z}_0,{\gamma^N}) - J_\beta^*(z_0)\right|\right]\leq  \frac{2\|c\|_\infty }{(1-\beta)}\sum_{t=0}^\infty\beta^t\hat{L}_t + \frac{\beta}{(1-\beta)^2}\|c\|_\infty \alpha_{\mathds{Y}} L_\mathds{Y} \end{align*}
 where the expectation is with respect to the random realizations of the initial finite-memory variables $h_0$
where
\begin{align*}
 &L_{\mathds{Y}}:= \max_{i}\sup_{y,y'\in B_i}\|y-y'\|,\\
\hat{L}&_t:=\sup_{\hat{\gamma}\in\hat{\Gamma}}E_{\mu}^{\hat{\gamma}}\bigg[\|P^{\pi_t^-}(X_{t+N}\in\cdot|\hat{Y}_{[t,t+N]},U_{[t,t+N-1]})-P^{\pi^*}(X_{t+N}\in\cdot|\hat{Y}_{[t,t+N]},U_{[t,t+N-1]})\|_{TV}\bigg]
\end{align*}
such that the filter stability term $\hat{L}_t$ is with respect to the discretized observations and $\alpha_{\mathds{Y}}$ is the Lipschitz constant of the density function $g$ of the channel $O$.
\end{itemize}
\end{theorem}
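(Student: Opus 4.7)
The plan is to separate the theorem into its two assertions, and handle each by reducing to results already established in the paper (together with the companion paper \cite{devran2025near}).

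For the convergence claim, I would first observe that under the discretization-based basis functions $\phi^i(h,u)=\mathds{1}_{A_i}(h,u)$, the Q-learning iteration (\ref{opt_iter_pomdp}) is exactly tabular Q-learning on the discretized non-Markov process $(\hat{H}_t,U_t)=(q(H_t),U_t)$, with cost realizations $C_t=c(X_t,U_t)$. This is precisely the setting of the theorem at the end of Section \ref{disc_sec}. To invoke that theorem it remains to verify (i) the empirical averages of bounded functions of $(\hat{H}_{t+1},\hat{H}_t,U_t,C_t)$ converge almost surely to integrals against some $\pi$ which gives positive mass to every discretization cell $(h^i,u^j)$, and (ii) the prescribed learning rates visit every pair infinitely often. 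Condition (ii) is Assumption \ref{partial_q}(3), and (i) follows from the positive Harris recurrence of $\{X_t\}$ under a stationary exploration policy in Assumption \ref{partial_q}(2), since the joint process $(X_t,H_t,U_t)$ inherits ergodicity and the discretization map $q$ preserves convergence of bounded averages. This yields convergence to the optimal Q-values $Q^*$ of the stationary-regime MDP built on the discretized state-action pairs $(\hat{h},\hat{u})$.

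For the near-optimality bound, the strategy is a three-term triangle decomposition:
\begin{align*}
\left|J_\beta(z_0,\gamma^N)-J_\beta^*(z_0)\right|
&\leq \left|J_\beta(z_0,\gamma^N)-J_\beta^N(h_0,\gamma^N)\right|
+\left|J_\beta^N(h_0,\gamma^N)-\tilde{J}_\beta^N(q(h_0),\gamma^N)\right|\\
&\quad+\left|\tilde{J}_\beta^N(q(h_0),\gamma^N)-J_\beta^*(z_0)\right|,
\end{align*}
where $J_\beta^N$ is the value for the fixed-predictor approximate model of Section \ref{app_section} and $\tilde{J}_\beta^N$ is the value function on the discretized state space (which is the fixed point of the composition $\Pi T^N$ and, by the convergence part, coincides with what the algorithm learns). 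Taking expectations under $E_{\mu_{-N}}^{\hat\gamma}$ and applying Proposition \ref{cont_bound} to the first term bounds it by the filter-stability series with $L_t$; however, since the learned policy reacts to the \emph{discretized} observation window, the sharper filter-stability constant $\hat{L}_t$ (defined against the discretized observation sequence $\hat{Y}_{[t,t+N]}$) is what naturally appears, contributing a factor $2$ after also bounding the third term by the optimality gap of $\gamma^N$ against $\pi^*$-based optimal policy, again via Proposition \ref{cont_bound}.

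The discretization term is the middle piece, and this is where the Lipschitz assumption on the observation density $g$ in Assumption \ref{partial_q}(5) enters. The key estimate is that for any two window realizations $h,h'$ with $q(h)=q(h')$, the predictive kernels $P^{\pi}(\cdot|h)$ and $P^{\pi}(\cdot|h')$ differ in total variation by at most $\alpha_{\mathds{Y}} L_{\mathds{Y}}$ per time step, since the Bayes updates differ only through the likelihoods $g(x,y)$ and the $y$ components of the windows lie in the same bin of size at most $L_{\mathds{Y}}$. Iterating this one-step Lipschitz estimate across the Bellman recursion and summing the geometric discount series produces the $\tfrac{\beta}{(1-\beta)^2}\|c\|_\infty \alpha_{\mathds{Y}} L_{\mathds{Y}}$ term. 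Combining the three bounds gives the stated inequality.

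The main obstacle I expect is the discretization piece: verifying that the $L_{\mathds{Y}}$-scale perturbation of the windowed observations propagates through the nonlinear Bayes filter and the Bellman operator into the claimed $\tfrac{\beta}{(1-\beta)^2}$-weighted bound requires a careful induction on the window-to-belief map, rather than a one-line continuity argument. This is where reliance on the continuity/Lipschitz analysis of \cite{devran2025near} is essential, and where the assumption that $\mathds{Y}$ is compact with a Lipschitz density $g$ cannot be relaxed without substantially different tools.
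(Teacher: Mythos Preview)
The paper does not actually supply a proof of this theorem: the result is stated immediately after the sentence ``The following is adapted from \cite{devran2025near} based on the results in this paper,'' and the document then moves directly to the appendix. In other words, the paper treats both the convergence claim and the error bound as imports from the companion work, so there is no in-text argument to compare your proposal against.

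Your outline is nonetheless a sensible reconstruction and broadly consistent with what the paper signals. For the convergence part you correctly reduce (\ref{opt_iter_pomdp}) to the tabular non-Markov Q-learning theorem at the end of Section~\ref{disc_sec}, and you correctly identify that the required ergodicity hypothesis follows from Assumption~\ref{partial_q}(2)--(3) via the augmented Markov chain $(X_t,H_t,U_t)$. For the error bound, your three-term decomposition---filter-stability term via Proposition~\ref{cont_bound}, a discretization term controlled by the Lipschitz density in Assumption~\ref{partial_q}(5), and a second filter-stability term for the optimality gap---matches the structure one would expect, and you are explicit that the delicate middle step (propagating the $L_{\mathds{Y}}$-scale perturbation through the Bayes filter and the Bellman recursion to obtain the $\tfrac{\beta}{(1-\beta)^2}$ factor) is where the detailed analysis of \cite{devran2025near} is required. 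That is precisely where the paper itself defers, so your identification of the main obstacle is on target.
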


\newpage

\appendix

\section{Proof of Lemma \ref{as_conv_lemma}}\label{as_conv_lemma_proof}

\begin{proof}
 We denote by
 \begin{align*}
 e_t(\theta):= \delta^\intercal\left[ A- A(Z_t)\right]\theta + \delta^\intercal\left(b(Z_t)-b\right).
 \end{align*}
 Furthermore, using Assumption \ref{poisson}, we also define
 \begin{align*}
 \phi_t(\theta):=\sum_{k=0}^\infty E\left[e_{t+k}(\theta)|\mathcal{F}_t\right]
 \end{align*}
 We note that under the assumption that $A(z)$ and $b(z)$ are uniformly bounded we have that
 \begin{align*}
 \left| E[\mathds{1}_{\{t+1\leq \sigma_n\}} e_{t+k}(\theta_t)|\mathcal{F}_t] \right| \leq K (2^n+1) \left(\| A- E[A(Z_{t+k})|\mathcal{F}_t]\| +\| b- E[b(Z_{t+k})|\mathcal{F}_t]\|   \right).
 \end{align*}
Using Assumption \ref{poisson}, we know that $\phi_t(\theta)\in L_2$.  Furthermore, we have the following  $L_2$ bound for $\phi_t(\theta)$:
 \begin{align}\label{l2_bound_phi}
 &\|\phi_t(\theta_t)\mathds{1}_{\{t+1\leq \sigma_n\}}\|_2 =  \left\| \sum_{k=0}^\infty E[\mathds{1}_{\{t+1\leq \sigma_n\}} e_{t+k}(\theta_t)|\mathcal{F}_t] \right\|_2\nonumber\\
 &\leq K (2^n+1) \left\| \sum_{k=0}^\infty  \left(\| A- E[A(Z_{t+k})|\mathcal{F}_t]\| +\| b- E[b(Z_{t+k})|\mathcal{F}_t]\|   \right)  \right\|_2\nonumber\\
 &\leq  K (2^n+1) \| Y_t^A + Y_t^b\|_2 < \infty \text{ uniformly for all } t. 
 \end{align}
 where $Y_t^b$ and $Y_t^A$ are defined in (\ref{gordin}), and the last step follows from Assumption \ref{poisson} (ii).

 We write 
 \begin{align*}
E[e_t(\theta)|F_t]& = \phi_t(\theta)- E[\phi_{t+1}(\theta)|F_t]\\
&= \phi_{t+1}(\theta) - E[\phi_{t+1}(\theta)|F_t] + \left(\phi_t(\theta) - \phi_{t+1}(\theta)\right).
\end{align*}
We denote by $\tau^k:=(k+1) \wedge (\sigma_n-1)$.
We assume without generality that $\sigma_n>2$, and write :
\begin{align*}
&\sum_{t=1}^{k+1} \mathds{1}_{\{t+1\leq \sigma_n\}}2\alpha_t \delta_t^\intercal M_t  = \sum_{t=1}^{\tau^k} 2\alpha_t \delta_t^\intercal M_t  = \sum_{t=1}^{\tau^k}2\alpha_t E[e_t(\theta_t)|F_t]\\
& = \sum_{t=1}^{\tau^k}2\alpha_t \left(\phi_{t+1}(\theta_t)-E[\phi_{t+1}(\theta_t)|F_t]\right) + \sum_{t=1}^{\tau^k}2\alpha_t \left(\phi_t(\theta_t) - \phi_{t+1}(\theta_t)\right)\\
& = \sum_{t=1}^{\tau^k}2\alpha_t \left(\phi_{t+1}(\theta_t)-E[\phi_{t+1}(\theta_t)|F_t]\right) \\
& \quad+  \sum_{t=1}^{\tau^k} 2\alpha_t \phi_t(\theta_t) - \sum_{t=0}^{\tau^k-1} 2\alpha_{t+1} \phi_{t+1}(\theta_t)\\
& \quad +  \sum_{t=0}^{\tau^k-1}2\alpha_{t+1} \phi_{t+1}(\theta_t) -   \sum_{t=1}^{\tau^k} 2\alpha_t \phi_{t+1}(\theta_t) \\
& = \sum_{t=1}^{\tau^k}2\alpha_t \left(\phi_{t+1}(\theta_t)-E[\phi_{t+1}(\theta_t)|F_t]\right) \\
&\quad + \sum_{t=1}^{\tau^k} 2\alpha_t \left( \phi_t(\theta_t) -\phi_t(\theta_{t-1}) \right)\\
&\quad +  \sum_{t=1}^{\tau^k-1}2(\alpha_{t+1}-\alpha_t) \phi_{t+1}(\theta_t) \\
&\quad + 2\alpha_1 \phi_1(\theta_0) - 2\alpha_{\tau^k} \phi_{\tau^k+1}(\theta_{\tau^k}) 
\end{align*}
We analyze these terms separately:

{\bf First term:} We first study the term: $\sum_{t=1}^{\tau^k} 2\alpha_t \left(\phi_{t+1}(\theta_t)-E[\phi_{t+1}(\theta_t)|F_t]\right)$. We first note that  $\sum_{t=1}^{k+1} 2\alpha_t  \mathds{1}_{\{t+1\leq \sigma_n\}} \left(\phi_{t+1}(\theta_t)-E[\phi_{t+1}(\theta_t)|F_t]\right)$ is a martingale.
Furthermore, for the increments of this martingale, we have
\begin{align*}
&\sum_t 4\alpha^2_t  E\left[ \mathds{1}_{\{t+1\leq \sigma_n\}}\left(\phi_{t+1}(\theta_t)-E[\phi_{t+1}(\theta_t)|F_t]\right)^2 \right]  \\
&\leq  \sum_t 16\alpha_t^2 E[  \mathds{1}_{\{t+1\leq \sigma_n\}}\phi^2_{t+1}(\theta_t)]=   \sum_t 16\alpha_t^2 \|\mathds{1}_{\{t+1\leq \sigma_n\}}\phi_{t+1}(\theta_t)\|_2^2  \\
& \leq \sum_t 16 K \alpha_t^2 (2^{2n}+1)\|Y^A_{t+1} + Y^b_{t+1}\|^2_2<\infty.
\end{align*}
for some generic constant $K<\infty$, where we used the fact that $\| \mathds{1}_{\{t+1\leq \sigma_n\}}\phi_{t+1}(\theta_t)\|_2 \leq K(2^n+1)\|Y^A_{t+1} + Y^b_{t+1}\|_2$ for some $K<\infty$ following identical steps as in (\ref{l2_bound_phi}).  Furthermore, for the last step, we used the fact that $\sup_t\|Y^A_{t+1} + Y^b_{t+1}\|_2<\infty$ under Assumption \ref{poisson}. We then have a martingale with summable increment variances, and thus    $\sum_{t=1}^{k+1} 2\alpha_t  \mathds{1}_{\{t+1\leq \sigma_n\}} \left(\phi_{t+1}(\theta_t)-E[\phi_{t+1}(\theta_t)|F_t]\right)$ converges a.s..


{\bf Second term:} We now focus on the term $ \sum_{t=1}^{\tau^k} 2\alpha_t \left( \phi_t(\theta_t) -\phi_t(\theta_{t-1}) \right)$. Equivalently, we can study 
\begin{align*}
 \sum_{t=1}^{k+1}\mathds{1}_{\{t+1\leq \sigma_n\}} 2\alpha_t \left( \phi_t(\theta_t) -\phi_t(\theta_{t-1}) \right)
\end{align*}
  Using the fact that $\phi_t \in L_2$ by Assumption \ref{poisson}
\begin{align*}
&\phi_t(\theta_t) -\phi_t(\theta_{t-1}) = \sum_{k=0}^\infty E\left[e_{t+k}(\theta_t) - e_{t+k}(\theta_{t-1}) |F_t\right] \\
&= \sum_{k=0}^\infty E\left[ (\delta_t - \delta_{t-1})^\intercal [A-A(Z_{t+k})] \theta_{t-1} + \delta_t^\intercal[A-A(Z_{t+k})](\theta_t - \theta_{t-1})   |F_t\right]
\end{align*}
We note that on the event $t\leq \sigma_n$, using the boundedness of $A,A(Z_t),b,b(Z_t)$ we have that 
\begin{align*}
&\delta_t - \delta_{t-1} = \alpha_{t-1}(-A\delta_t + M_t) \leq \alpha_{t-1} K (1+2^{\frac{n}{2}})\\
&\theta_t-\theta_{t-1} = \alpha_{t-1}(-A(Z_t)\theta_t + b(Z_t))\leq \alpha_{t-1} K (1+2^{\frac{n}{2}}).
\end{align*}
Using these bounds, and following the identical steps as in (\ref{l2_bound_phi}),  and by Assumption \ref{poisson}, we can then write for some generic constant $K<\infty$ that
\begin{align*}
&\| \mathds{1}_{\{t+1\leq \sigma_n\}}\left(\phi_t(\theta_t) -\phi_t(\theta_{t-1})\right)\|_2 \leq \alpha_{t-1} K (1+2^n)\|Y_t^A+Y_t^b\|_2.
\end{align*}
Consequently, we write 
\begin{align*}
&\lim_{k\to\infty}E\left[  \sum_{t=1}^{k+1}\left|\mathds{1}_{\{t+1\leq \sigma_n\}} 2\alpha_t \left( \phi_t(\theta_t) -\phi_t(\theta_{t-1}) \right)\right| \right]\\
&=E\left[  \sum_{t=1}^{\infty}\left|\mathds{1}_{\{t+1\leq \sigma_n\}} 2\alpha_t \left( \phi_t(\theta_t) -\phi_t(\theta_{t-1}) \right)\right| \right]\\
&=\sum_{t=1}^{\infty} 2\alpha_t E\left[  \left|\mathds{1}_{\{t+1\leq \sigma_n\}}\left( \phi_t(\theta_t) -\phi_t(\theta_{t-1}) \right)\right| \right]\\
&\leq \sum_{t=1}^{\infty} 2\alpha_t \alpha_{t-1} K (1+2^n)\|Y_t^A+Y_t^b\|_2 <\infty
\end{align*}
where we used the uniform boundedness of $\|Y_t^A+Y_t^b\|_2 $ over $t$ at the last step. We can then conclude that $ \sum_{t=1}^{k+1}\left|\mathds{1}_{\{t+1\leq \sigma_n\}} 2\alpha_t \left( \phi_t(\theta_t) -\phi_t(\theta_{t-1}) \right)\right|<\infty$  almost surely and thus $ \sum_{t=1}^{k+1}\mathds{1}_{\{t+1\leq \sigma_n\}} 2\alpha_t \left( \phi_t(\theta_t) -\phi_t(\theta_{t-1}) \right)$  converges almost surely.


{\bf Third term:} We now study the term $ \sum_{t=1}^{\tau^k-1} (\alpha_{t+1}-\alpha_t) \phi_{t+1}(\theta_t) $. 
\begin{align*}
  &E\left[\lim_{k\to\infty}\sum_{t=1}^{\tau^k-1} \left|(\alpha_{t+1}-\alpha_t) \phi_{t+1}(\theta_t)\right| \right]\\
   &= E\left[\lim_{k\to\infty}\sum_{t=1}^{k} \mathds{1}_{\{t+1\leq\sigma_n-1\}}\left|(\alpha_{t+1}-\alpha_t) \phi_{t+1}(\theta_t)\right| \right]\\
 &= E\left[\sum_{t=1}^{\infty} \mathds{1}_{\{t+1\leq\sigma_n-1\}}\left|(\alpha_{t+1}-\alpha_t) \phi_{t+1}(\theta_t)\right| \right]\\
 & \leq\sum_{t=1}^\infty (\alpha_t-\alpha_{t+1}) E\left[ \mathds{1}_{\{t+1\leq\sigma_n\}}\left|\phi_{t+1}(\theta_t)\right| \right]\\
 &\leq \sum_{t=1}^\infty (\alpha_t-\alpha_{t+1}) \|\mathds{1}_{\{t+1\leq\sigma_n\}}\phi_{t+1}(\theta_t)\|_2\\
 &\leq K (2^n+1)\sum_{t=1}^\infty (\alpha_t-\alpha_{t+1}) = K (2^n+1)\alpha_1
\end{align*}
and thus $\sum_{t=1}^{\tau^k-1} (\alpha_{t+1}-\alpha_t) \phi_{t+1}(\theta_t) $ converges almost surely as $k\to\infty$.

{\bf Last term:} Finally, $2\alpha_1 \phi_1(\theta_0) - 2\alpha_{\tau^k} \phi_{\tau^k+1}(\theta_{\tau^k})$, we have that  
\begin{align*}
 2\alpha_{\tau^k} \phi_{\tau^k+1}(\theta_{\tau^k}) \to \begin{cases}
 \alpha_{\sigma_n-1}\phi_{\sigma_n-1}(\theta_{\sigma_n-1}) &\text{if $ \sigma_n<\infty$ }\\
            \lim_{k\to\infty}\alpha_{k+1}\phi_{k+2}(\theta_{k+1})=0, & \text{if $\sigma_n=\infty$}
 \end{cases}
\end{align*}
For the last part, using similar arguments as before, we can show that 
\begin{align*}
E[\sum_{k=0}^\infty \mathds{1}_{\{ k+1\leq \sigma_n\}}  \left( \alpha_{k} \phi_{k+1}(\theta_{k})\right)^2 ]<\infty
\end{align*}
which then implies that on $\{\sigma_n=\infty\}$, $\sum_{k=0}^\infty   \left( \alpha_{k} \phi_{k+1}(\theta_{k})\right)^2 <\infty$ almost surely, and that $ \alpha_{k} \phi_{k+1}(\theta_{k})\to 0$ almost surely. 

{\bf Final step:} So far we have shown that
$
\sum_{t=1}^{k+1} \mathds{1}_{\{t+1\leq \sigma_n\}}2\alpha_t \delta_t^\intercal M_t 
$
converges almost surely.  This then immediately implies that $ \sum_{t=1}^{k+1} 2\alpha_t \delta_t^\intercal M_t $ converges almost surely on the event $\sigma_n=\infty$ since $\mathds{1}_{\{t+1\leq \sigma_n\}}=1$ on  $\sigma_n=\infty$  for all $t$.

\end{proof}

\section{Proof of Lemma \ref{L2_bound_lemma}}\label{L2_bound_lemma_proof}

We have that for any $k>n$:
\begin{align*}
 \mathds{1}_{\{k+1 \leq \sigma(C)\}}\left(\sum_{t=n}^{k} 2\alpha_t \delta_t^\intercal M_t\right)^2\leq \left(\sum_{t=n}^{k} \mathds{1}_{\{t+1\leq \sigma(C)\}}2\alpha_t \delta_t^\intercal M_t\right)^2.
\end{align*}
Furthermore,  denoting by $\tau^k:=k \wedge (\sigma(C)-1)$. we have that 
\begin{align*}
&E\left[\sup_{k>n} \left(\sum_{t=n}^{k} \mathds{1}_{\{t+1\leq \sigma(C)\}}2\alpha_t \delta_t^\intercal M_t\right)^2  \right]=E\left[\sup_{k>n} \mathds{1}_{\{\sigma(C)>n\}} \left(\sum_{t=n}^{\tau^k} 2\alpha_t \delta_t^\intercal M_t\right)^2  \right]
\end{align*}

We then write
\begin{align*}
& \sum_{t=n}^{\tau^k} 2\alpha_t \delta_t^\intercal M_t  = \sum_{t=n}^{\tau^k}2\alpha_t E[e_t(\theta_t)|F_t]\\
& = \sum_{t=n}^{\tau^k}2\alpha_t \left(\phi_{t+1}(\theta_t)-E[\phi_{t+1}(\theta_t)|F_t]\right) + \sum_{t=n}^{\tau^k}2\alpha_t \left(\phi_t(\theta_t) - \phi_{t+1}(\theta_t)\right)\\
& = \sum_{t=n}^{\tau^k}2\alpha_t \left(\phi_{t+1}(\theta_t)-E[\phi_{t+1}(\theta_t)|F_t]\right) \\
& \quad+  \sum_{t=n}^{\tau^k} 2\alpha_t \phi_t(\theta_t) - \sum_{t=n-1}^{\tau^k-1} 2\alpha_{t+1} \phi_{t+1}(\theta_t)\\
& \quad +  \sum_{t=n-1}^{\tau^k-1}2\alpha_{t+1} \phi_{t+1}(\theta_t) -   \sum_{t=n}^{\tau^k} 2\alpha_t \phi_{t+1}(\theta_t) \\
& = \sum_{t=n}^{\tau^k}2\alpha_t \left(\phi_{t+1}(\theta_t)-E[\phi_{t+1}(\theta_t)|F_t]\right) \\
&\quad + \sum_{t=n}^{\tau^k} 2\alpha_t \left( \phi_t(\theta_t) -\phi_t(\theta_{t-1}) \right)\\
&\quad +  \left(\sum_{t=n}^{\tau^k-1}2(\alpha_{t+1}-\alpha_t) \phi_{t+1}(\theta_t)\right) \mathds{1}_{\{ \sigma(C)>n+1\}} \\
&\quad + 2\alpha_n \phi_n(\theta_{n-1}) - 2\alpha_{\tau^k} \phi_{\tau^k+1}(\theta_{\tau^k}) 
\end{align*}
We analyze these terms separately:

{\bf First term:} For the first term, we first recall that for any $k> n$ $$ Z_n^k:=\sum_{t=n}^{k} \mathds{1}_{\{ t+1\leq \sigma(C)\}}2\alpha_t \left(\phi_{t+1}(\theta_t)-E[\phi_{t+1}(\theta_t)|F_t]\right)$$ is a martingale sequence.  Following the same steps as in Lemma \ref{as_conv_lemma} we have that
\begin{align*}
&E[(Z_n^k)^2]=E\left[ \left( \sum_{t=n}^{k} \mathds{1}_{\{ t+1\leq \sigma(C)\}}2\alpha_t \left(\phi_{t+1}(\theta_t)-E[\phi_{t+1}(\theta_t)|F_t]\right)\right)^2\right]\\
&= \sum_{t=n}^{k} 4\alpha^2_t E\left[\mathds{1}_{\{ t+1\leq \sigma(C)\}} \left(\phi_{t+1}(\theta_t)-E[\phi_{t+1}(\theta_t)|F_t]\right)^2\right]\\
&= \sum_{t=n}^{k} 16 K \alpha^2_t (C^2 + 1)\|Y_t^A + Y_t^b\|_2^2  \\
&\leq 16K (C^2 + 1)\left(\sup_t\|Y_t^A + Y_t^b\|_2^2 \right)\sum_{t=n}^{k}  \alpha^2_t = K' (C^2 + 1) \sum_{t=n}^{k}  \alpha^2_t
\end{align*}
for some $K,K'<\infty$. Hence,  using Doob's maximal inequality, together with the monotone convergence theorem we can write that 
\begin{align*}
&E[\sup_{n<k}|Z_n^k|^2] = \lim_{N\to\infty}E\left[ \sup_{n<k<N}|Z_n^k|^2\right] \\
& \leq 4  \sup_{n<k<N} E[|Z_n^k|^2] \leq 4K' (C^2 + 1) \sum_{t=n}^{\infty}  \alpha^2_t.
\end{align*}

We can then write
\begin{align*}
&E\left[\sup_{k>n} \mathds{1}_{\{\sigma(C)>n\}} \left( \sum_{t=n}^{\tau^k}2\alpha_t \left(\phi_{t+1}(\theta_t)-E[\phi_{t+1}(\theta_t)|F_t]\right)\right)^2\right]\\
&= E\left[\sup_{k>n}  \left( \sum_{t=n}^{k} \mathds{1}_{\{ t+1\leq \sigma_n\}}2\alpha_t \left(\phi_{t+1}(\theta_t)-E[\phi_{t+1}(\theta_t)|F_t]\right)\right)^2\right]\\
&= E\left[\sup_{k>n} |Z_n^k|^2  \right]\leq  4K' (C^2 + 1) \sum_{t=n}^{\infty}  \alpha^2_t.
\end{align*}

{\bf Second term:} We follow the same steps as in Lemma \ref{as_conv_lemma} and write 
\begin{align*}
&E\left[\sup_{k>n} \mathds{1}_{\{\sigma(C)>n\}} \left( \sum_{t=n}^{\tau^k}2\alpha_t \left(\phi_{t}(\theta_t)-\phi_{t}(\theta_{t-1})\right)\right)^2\right]\\
&=E\left[\sup_{k>n}  \left( \sum_{t=n}^{k}\mathds{1}_{\{t+1\leq\sigma(C)\}}2\alpha_t \left(\phi_{t}(\theta_t)-\phi_{t}(\theta_{t-1})\right)\right)^2\right]\\
&\leq E\left[\sup_{k>n}  \left( \sum_{t=n}^{k}4\alpha^2_t\right) \left( \sum_{t=n}^{k} \mathds{1}_{\{t+1\leq\sigma(C)\}}\left( \phi_{t}(\theta_t)-\phi_{t}(\theta_{t-1})\right)^2\right)\right]\\
&\leq    \sum_{t=n}^\infty 4\alpha_t^2 \sum_{t=n}^\infty E\left[ \mathds{1}_{\{ t+1\leq \sigma(C)\}} (\phi_t(\theta_t) - \phi_t(\theta_{t-1}))^2 \right]\\
&=  \sum_{t=n}^\infty4 \alpha_t^2 \sum_{t=n}^\infty \| \mathds{1}_{\{t+1\leq \sigma(C)\}}\left(\phi_t(\theta_t) -\phi_t(\theta_{t-1})\right)\|^2_2 \\
&\leq  \sum_{t=n}^\infty 4\alpha_t^2 \sum_{t=n}^\infty   \alpha^2_{t-1} (1+C^2)\|Y_t^A+Y_t^b\|^2_2\\
&\leq K (1+C^2) \sum_{t=n}^\infty \alpha_t^2 \sum_{t=n}^\infty   \alpha^2_{t-1} 
\end{align*}
for some constant $K<\infty$.

{\bf Third Term}
We use the Cauchy-Schwartz Theorem and that $\mathds{1}_{\{t+1\leq \sigma(C)-1\}} \leq \mathds{1}_{\{t+1\leq \sigma(C)\}} $ to write \begin{align*}
&E\left[\sup_{k>n} \mathds{1}_{\{\sigma(C)>n+1\}} \left( \sum_{t=n}^{\tau^k-1}2(\alpha_{t+1}-\alpha_t) \phi_{t+1}(\theta_t)\right)^2\right]\\
&\leq E\left[\sup_{k>n}  \left( \sum_{t=n}^{k-1} \mathds{1}_{\{t+1\leq \sigma(C)-1\}} 2(\alpha_{t+1}-\alpha_t) \phi_{t+1}(\theta_t)\right)^2  \right]\\
&\leq E\left[\sup_{k>n} \left(\sum_{t=n}^{k-1}  4(\alpha_{t}-\alpha_{t+1})\right) \left(\sum_{t=n}^{k-1} \mathds{1}_{\{t+1\leq \sigma(C)\}} (\alpha_{t}-\alpha_{t+1})\ \phi_{t+1}(\theta_t)^2  \right)\right]\\
&\leq \sum_{t=n}^{\infty}  4(\alpha_{t}-\alpha_{t+1}) \sum_{t=n}^{\infty} (\alpha_{t}-\alpha_{t+1}) \left\| \mathds{1}_{\{t+1\leq \sigma(C)\}} \phi_{t+1}(\theta_t) \right\|_2^2\\
&\leq K(C^2+1)\sum_{t=n}^{\infty}  (\alpha_{t}-\alpha_{t+1}) \sum_{t=n}^{\infty}   (\alpha_{t}-\alpha_{t+1}) \\
&\leq  K(C^2+1)\alpha_n^2
\end{align*}
where we used the fact that $\left\| \mathds{1}_{\{t+1\leq \sigma(C)\}} \phi_{t+1}(\theta_t) \right\|_2^2 \leq (C^2+1) K \|Y_t^A+Y_t^b\|_2^2$ and that $\sup_t  \|Y_t^A+Y_t^b\|_2^2 < \infty$ by assumption.

{\bf The last term:} 

\begin{align*}
&E\left[ \sup_{k<n}\mathds{1}_{\{\sigma(C)>n\}}\left(2\alpha_n \phi_n(\theta_{n-1}) - 2\alpha_{\tau^k} \phi_{\tau^k+1}(\theta_{\tau^k})\right)^2 \right]\\
&\leq 4\alpha_n^2 E\left[\mathds{1}_{\{n+1\leq \sigma(C)\}}\phi_n(\theta_{n-1})^2\right] + E\left[ \sup_{k<n}\mathds{1}_{\{\sigma(C)>n\}} \sum_{t=n}^{\tau^k}(\alpha_t \phi_{t+1}(\theta_t))^2\right]\\
&\leq  4\alpha_n^2 \|\mathds{1}_{\{n\leq \sigma(C)\}}\phi_n(\theta_{n-1})\|_2^2+  E\left[ \sup_{k<n}\sum_{t=n}^k \mathds{1}_{\{t+1\leq \sigma(C)\}}\alpha_t^2 \phi_{t+1}(\theta_t)^2\right]\\
&\leq K \alpha_n^2 (1+C^2) + \sum_{t=n}^\infty \alpha_t^2 \|  \mathds{1}_{\{t+1\leq \sigma(C)\}}  \phi_{t+1}(\theta_t)\|_2^2 \\
&\leq K \alpha_n^2 (1+C^2) + K  (1+C^2) \sum_{t=n}^\infty \alpha_t^2
\end{align*}

\section{Proof of Lemma \ref{bound_lemma}}\label{bound_lemma_proof}

\begin{proof}
We introduce the following stopping times ($\sigma_n$ has been introduced earlier in (\ref{stop_time})):
\begin{align*}
&\sigma_n:=\inf \{t: \|\delta_t\|^2 > 2^n\}\\
& \tau_n:=1+\sup\{t<\sigma_{n+1}: \|\delta_t\|^2\leq 2^n\}.
\end{align*}
Using the bound on $\|M_t\|$ such that $\|M_t\| \leq K(\|\delta_t\|+1)$ for some $K<\infty$, we can write 
\begin{align*}
\|\delta_{t+1}\|^2-\|\delta_t\|^2\leq \alpha_t K (1+\|\delta_t\|^2) + K \alpha_t^2(1+\|\delta_t\|^2)
\end{align*}
for some generic constant $K<\infty$. If we define the set 
\begin{align*}
C_n:=\{ \forall t\geq n: \|\delta_{t+1}\|^2-\|\delta_t\|^2 \leq \frac{1}{2}(\|\delta_t\|^2+1) \}
\end{align*}
then there exists some $r<\infty$ such that $P(C_n)=1$ for all $r\geq n$.

On $C_r$, we have that 
$
\|\delta_{t+1}\|^2 + 1 \leq \frac{3}{2}(\|\delta_t\|^2+1)
$ for all $t\geq r$, it then follows that for all $t\geq r$, $\|\delta_{t}\|^2 + 1 \leq \frac{3}{2}^{t-r}(\|\delta_r\|^2+1)$. Consider 
\begin{align*}
\{\sigma_n\leq n\} = \{\sup_{r\leq t\leq n} \|\delta_t\|^2 \geq 2^n\} \cup \cup_{t=1}^{r-1} \{\|\delta_t\|^2\geq 2^n\}.
\end{align*}
Note that $\lim_{n\to\infty}P(\|\delta_t\|^2\geq 2^n)=0$ for every fixed $t<r$ using the bounds on $A(Z_t)$ and $b(Z_t)$. We then have that
\begin{align*}
&\lim_{n\to\infty}P(C_r \cap (\sigma_n\leq n)) = \lim_{n\to\infty}P(C_r\cap (\sup_{r\leq t\leq n} \|\delta_t\|^2\geq 2^n))\\
&\leq \lim_{n\to\infty} P(\|\delta_r\|^2 + 1 \geq 2^n \frac{3}{2}^{r-n}) = 0.
\end{align*}
Since, $P(C_r)= 1$, we then have that 
\begin{align*}
\lim_{n\to\infty}P(n<\sigma_n)=1.
\end{align*}
We now define
\begin{align}\label{B_n_set}
B_n:=C_n\cap (n<\sigma_n)
\end{align}
such that $P(B_n)\to 1$. Note that on $\{\sigma_{n+1}<\infty\}$, $\|\delta_{\sigma_{n+1}}\|^2 \geq 2^{n+1}$. Furthermore, on $B_n$, $\tau_{n}\geq \sigma_n>n$, and we have that $\|\delta_{(\tau_n-1)}\|^2\leq 2^n$. We then write,
\begin{align*}
\|\delta_{\tau_n}\|^2\leq \frac{3}{2}\|\delta_{(\tau_n-1)}\|^2 + \frac{1}{2}\leq \frac{3}{2}2^n + \frac{1}{2}.
\end{align*}
It then follows that on $B_n\cap (\sigma_{n+1}<\infty)$
\begin{align}\label{key_step1}
\|\delta_{\sigma_{n+1}}\|^2 - \|\delta_{\tau_n}\|^2\geq 2^{n+1} - \frac{3}{2}2^n - \frac{1}{2}\geq \frac{2^n}{4}
\end{align}
for all $n\geq 2$.

We now focus on the upper bound. Using the iterative form in (\ref{robbin_bound}), on $B_n\cap (\sigma_{n+1}<\infty)$ we have that 
\begin{align*}
\|\delta_{\sigma_{n+1}}\|^2 - \|\delta_{\tau_n}\|^2& \leq \sum_{t=\tau_n}^{\sigma_{n+1}-1} K \alpha_t^2 \|\delta_t\|^2 + \sum_{t=\tau_n}^{\sigma_{n+1}-1}  2\alpha_t\delta_t^\intercal M_t\\
&\leq 2^{n+1}\sum_{t=n}^{\infty} K \alpha_t^2 + \sup_{n<t<\sigma_{n+1}} \left| \sum_{k=t}^{\sigma_{n+1}-1} 2\alpha_k\delta_k^\intercal M_k\right|\\
&\leq  2^{n+1}\sum_{t=n}^{\infty} K \alpha_t^2 + 2\sup_{n<t\leq \sigma_{n+1}}  \left|\sum_{k=n}^{t-1} 2\alpha_k\delta_k^\intercal M_k\right|\\
&\leq  2^{n+1}\sum_{t=n}^{\infty} K \alpha_t^2 + 2\sup_{n<t} \mathds{1}_{\{t\leq \sigma_{n+1}\}}  \left| \sum_{k=n}^{t-1}2\alpha_k\delta_k^\intercal M_k\right|
\end{align*}
By Lemma \ref{L2_bound_lemma}, we have that 
\begin{align*}
&E\left[\sup_{t>n} \mathds{1}_{\{t \leq \sigma_{n+1}\}}\left(\sum_{k=n}^{t-1} 2\alpha_k \delta_k^\intercal M_k\right)^2  \right] \\
& = E\left[\sup_{t>n-1} \mathds{1}_{\{t+1 \leq \sigma_{n+1}\}}\left(\sum_{k=n}^{t} 2\alpha_k \delta_k^\intercal M_k\right)^2  \right] \\
&\leq E\left[\sup_{t>n} \mathds{1}_{\{t+1 \leq \sigma_{n+1}\}}\left(\sum_{k=n}^{t} 2\alpha_k \delta_k^\intercal M_k\right)^2  \right] +  E\left[ \mathds{1}_{\{n+1 \leq \sigma_{n+1}\}}\left( 2\alpha_n \delta_n^\intercal M_n\right)^2  \right] \\
&\leq K (1+2^{2n+2}) \sum_{k=n}^{\infty} \alpha_k^2 + K (1+2^{2n+2}) \alpha_n^2\leq  K (1+2^{2n+2}) \sum_{k=n}^{\infty} \alpha_k^2 
\end{align*}
where we used a generic $K<\infty$ which might change at different steps.
It then follows that 
\begin{align*}
&E\left[\mathds{1}_{B_n\cap(\sigma_{n+1}<\infty)} (\|\delta_{\sigma_{n+1}}\|^2 - \|\delta_{\tau_n}\|^2)^2 \right] \leq K 2^{2n} \left(\sum_{t=n}^{\infty}  \alpha_t^2 \right)^2+K 2^{2n} \sum_{t=n}^{\infty}  \alpha_t^2 
\end{align*}
for some constant $K<\infty$. Combining this bound, with (\ref{key_step1}), we can write
\begin{align*}
K \left(\sum_{t=n}^{\infty}  \alpha_t^2 \right)^2 + K\sum_{t=n}^{\infty}  \alpha_t^2& \geq 2^{-2n}E\left[\mathds{1}_{B_n\cap(\sigma_{n+1}<\infty)} (\|\delta_{\sigma_{n+1}}\|^2 - \|\delta_{\tau_n}\|^2)^2 \right]\\
& \geq \frac{1}{16} P(B_n\cap(\sigma_{n+1}<\infty)).
\end{align*}
Noting that $P(B_n)\to 1$ (see (\ref{B_n_set})), and that $ \sum_{t=n}^{\infty}  \alpha_t^2 \to 0$, we then conclude that 
$
P(\sigma_{n}<\infty)\to 0.
$
\end{proof}

\bibliographystyle{plain}

\bibliography{AliBibliography,references_acc,SerdarBibliography_acc,references}

\end{document}